\documentclass{article}

\PassOptionsToPackage{numbers, sort}{natbib}
\usepackage[main, final]{neurips_2026}
\renewcommand{\acksection}{\section*{Acknowledgments}}

\usepackage[utf8]{inputenc}
\usepackage[T1]{fontenc}
\usepackage[hidelinks]{hyperref}
\usepackage{url}
\usepackage{booktabs}
\usepackage{amsfonts}
\usepackage{nicefrac}
\usepackage{microtype}
\usepackage{xcolor}
\usepackage{amsmath}
\usepackage{amssymb}
\usepackage{amsthm}
\usepackage{dsfont}
\usepackage{graphicx}
\usepackage{algorithm}
\usepackage{algpseudocode}
\usepackage{multirow}
\usepackage{wrapfig}
\usepackage{caption}
\usepackage{placeins}
\usepackage{needspace}
\usepackage{tikz}
\usetikzlibrary{positioning, arrows.meta, calc, fit, backgrounds, shapes.geometric, matrix}

\newtheoremstyle{plainnopunctbold}%
  {\topsep}{\topsep}%
  {\itshape}{}%
  {\bfseries}{}{.5em}%
  {\thmname{#1}\thmnumber{ #2}\thmnote{ (#3)}\normalfont.}
\theoremstyle{plainnopunctbold}
\newtheorem{proposition}{Proposition}

\title{DIGS: Distribution-Informed Gaussian Splatting for\\ Training-Free Open-Vocabulary 3D Segmentation}
\hypersetup{
  pdftitle={DIGS: Distribution-Informed Gaussian Splatting for Training-Free Open-Vocabulary 3D Segmentation},
  pdfauthor={Zhifan Zeng, Wenbo Xiao, Arcot Sowmya, Changming Sun},
  pdfkeywords={3D Gaussian Splatting, open-vocabulary segmentation, instance segmentation}
}

\author{%
  Zhifan Zeng$^{1}$ \quad Wenbo Xiao$^{1}$ \quad
  Arcot Sowmya$^{1}$ \quad Changming Sun$^{2}$ \\
  \normalfont $^{1}$UNSW Sydney \qquad $^{2}$CSIRO Technology \\
  \texttt{zhifan.zeng@unsw.edu.au} \quad
  \texttt{wenbo.xiao@unsw.edu.au} \\
  \texttt{a.sowmya@unsw.edu.au} \quad
  \texttt{changming.sun@csiro.au}
}

\begin{document}
\maketitle

\begin{abstract}
Open-vocabulary 3D Gaussian segmentation typically binds semantic features to the per-Gaussian state via per-scene gradient training (taking $1$--$4$ \,h) or the more recent closed-form distillation. Both commit each primitive to a single feature; therefore, boundary Gaussians spanning multiple 2D instances are forced into a hard decision that degrades retrieval accuracy and yields unstable mask boundaries at render time.  We present \textbf{DIGS}, a training-free framework with two stages, \emph{class-agnostic mapping} and \emph{multi-modal encoding}, that decouples instance discovery from linguistic naming.  At the Gaussian level, our distribution-informed representation maintains an explicit top-$M$ label distribution per primitive, updated by a tempered Dirichlet rule; preservation of multi-peaked label distributions at boundary primitives allows a late-argmax renderer to recover clean instance boundaries.  A bidirectional state machine aggregates these distributions into a persistent class-agnostic instance ontology.  The instance-level encoding stage then fuses the CLIP visual embedding of each instance's isolated 3D footprint with the CLIP-text embedding of an LLM-generated attribute description, disambiguating visually similar instances. On a single A100, DIGS completes mapping and instance encoding on LERF-OVS in $1.0$--$3.2$\,min per scene, depending on whether LLM fusion is enabled, and serves native 3D queries in $5.4$\,ms.  It reaches $71.95\%$ mean mIoU on LERF-OVS, outperforming both the training-free SFS and training-based LaGa baselines.  DIGS also achieves the highest mean mIoU on LERF-Mask and 3D-OVS and surpasses Gaga and ObjectGS on Replica under their respective evaluation protocols. Code is available at \url{https://github.com/ZephyrZos/DIGS}.
\end{abstract}

\section{Introduction}
\label{sec:intro}

Semantic understanding of a 3D environment from sequential observations is essential for embodied agents~\cite{savva2019habitat}, virtual reality~\cite{jiang2024vr}, and language-driven scene editing~\cite{chen2024gaussianeditor}. Open-vocabulary segmentation on 3D Gaussian Splatting (3DGS)~\cite{kerbl20233d} requires a per-Gaussian semantic representation that supports text queries.  Prior methods bind a single semantic feature (a continuous CLIP embedding or a clustered instance ID) to each Gaussian via per-scene gradient training~\cite{qin2024langsplat,li2025langsplatv2, shi2024language,wu2024opengaussian,ye2024gaussian, cen2025tackling}.  This design couples two failures.  First, boundary Gaussians spanning multiple 2D instances are forced to one feature, blurring rendered edges.  Second, the per-scene optimisation (taking $1$--$4$\,h) bakes the language signal into the per-Gaussian state, locking each reconstructed asset to one language model.  Recent work eases one symptom at a time.  The closed-form Splat Feature Solver~\cite{xiong2025splat} reduces training to minutes but inherits the boundary blur.  ObjectGS~\cite{zhu2025objectgs} sharpens boundaries by splitting cross-instance Gaussians under joint photometric and cross-entropy optimisation, which can hurt RGB fidelity when the 2D mask supervision is noisy.  The Bayesian alternative~\cite{kamata2026b} treats each query as a binary Beta--Bernoulli problem.  No prior design, however, resolves the coupling between training cost and boundary fidelity within a single per-Gaussian representation.

\begin{figure*}[t]
  \centering
  \includegraphics[width=\textwidth]{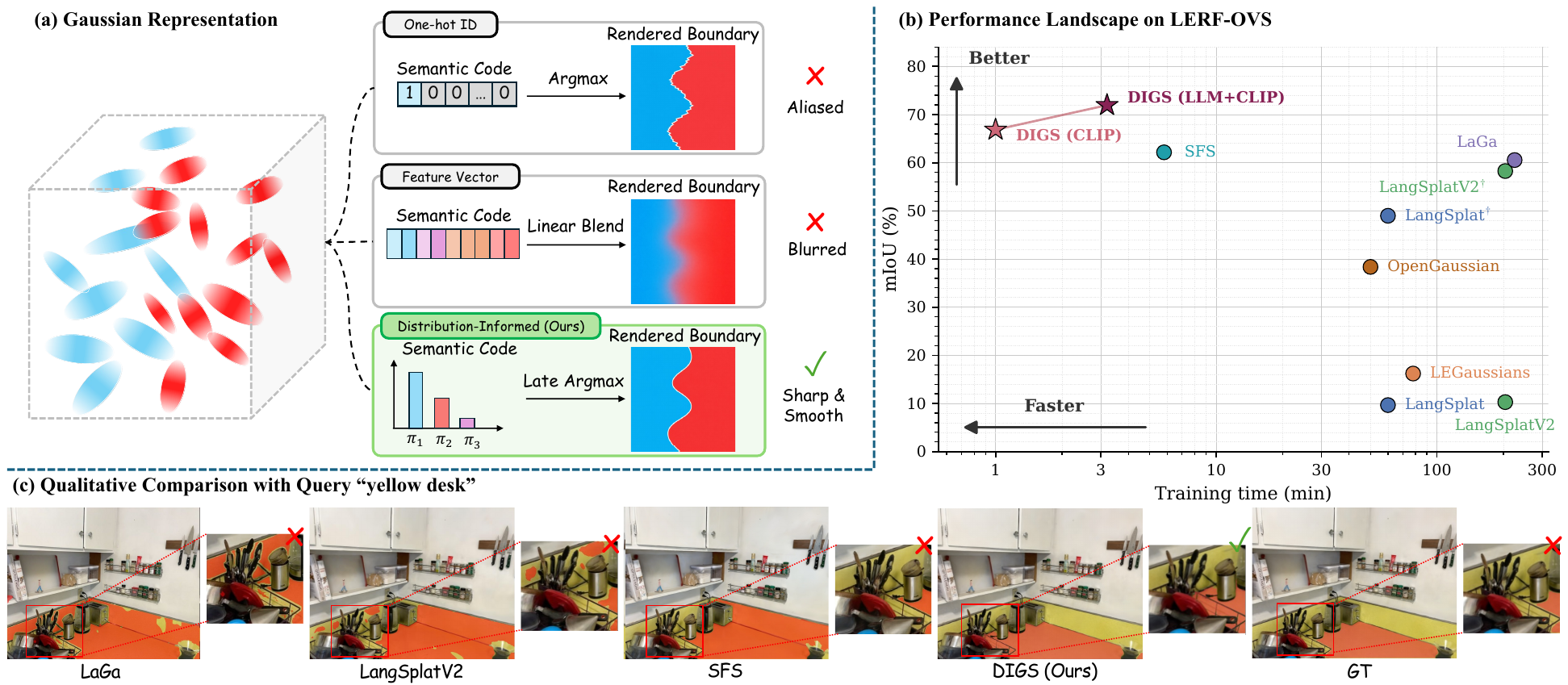}
  \caption{DIGS at a glance. \textbf{(a)}~Three per-Gaussian semantic representations (One-hot ID, Feature Vector, and distribution-informed per-Gaussian voting (ours)) and their rendered boundaries. \textbf{(b)}~LERF-OVS performance landscape (with our approach performing the best). \textbf{(c)}~Qualitative comparison with adversarial query (Waldo\_kitchen, \emph{yellow desk}).}
  \label{fig:teaser}
\end{figure*}

The two failures are not independent symptoms but consequences of what each Gaussian stores as the semantic state, as Figure~\ref{fig:teaser}a illustrates.  A single one-hot label (e.g., ObjectGS~\cite{zhu2025objectgs}) yields sharp argmax-composited boundaries, but the discrete cross-entropy loss forces boundary Gaussians to split during training, degrading the reconstructed RGB whenever the 2D mask supervision is noisy or view-inconsistent.  A continuous CLIP-style feature (e.g., LangSplat~\cite{qin2024langsplat}) tolerates approximate geometry, but linear blending of adjacent-instance embeddings at boundary pixels produces semantically ambiguous mixtures that may match neither class at query time.  Both choices commit each primitive to a single semantic mode: the discrete one pays in geometry, the continuous one in semantics. Neither breaks the boundary--cost coupling.

We present DIGS, a training-free framework whose distribution-informed per-Gaussian representation stores an explicit top-$M$ label distribution, updated by a tempered Dirichlet rule.  Targeting the boundary failure, the distribution is composited at the pixel level via late-argmax rendering; boundary Gaussians contribute proportionally to adjacent instances and the final label is decided by the majority of unambiguous neighbours, lifting LERF-Mask mean mIoU to 91.71 ($+3.3$ over the previous best ObjectGS at $88.4$) and adding $+8.84$ mean mIoU in the LERF-OVS distribution-preserving ablation (\S\ref{sec:component_ablation}).  Targeting the cost-coupling failure, the tempered Dirichlet update is closed-form with a provably bounded per-frame injection (Proposition~\ref{prop:bound}) and executes as a single parallel pass per frame, completing class-agnostic mapping in 0.91 min on a single A100---versus $1$--$4$\,h of per-scene gradient training for distillation baselines, a $60$--$250\times$ reduction, as Figure~\ref{fig:teaser}b visualises.

On LERF-OVS, these choices place DIGS at 71.95 mean mIoU, surpassing the strongest training-free baseline SFS and the strongest training-based baseline LaGa, defining a new accuracy/wall-time frontier. The advantage sharpens on adversarial prompts: as Figure~\ref{fig:teaser}c shows on \emph{yellow desk} in Waldo\_kitchen, where LaGa, LangSplatV2, and SFS all leak the mask into surrounding fixtures, DIGS returns a clean boundary that covers more of the object than even the human-annotated ground truth.  These outcomes follow from three contributions: a distribution-informed per-Gaussian representation (top-$M$ label distribution) that resolves the boundary failure via late-argmax rendering; a closed-form tempered Dirichlet voting rule with a provably bounded per-frame injection that yields a class-agnostic instance ontology in a single parallel pass without per-scene gradient training; and a post-hoc multi-modal encoding that fuses each instance's CLIP visual embedding with the CLIP-text embedding of an LLM-generated attribute description, keeping the mapping output reusable across language backbones.

\section{Related Work}
\label{sec:related}

\noindent\textbf{3D scene semantic segmentation.}\; Since 3D Gaussian Splatting~\cite{kerbl20233d} became a mainstream scene representation, a growing body of work has addressed how to assign semantic or instance information to its primitives.  The dominant recipe uses a 2D class-agnostic mask generator, typically SAM~\cite{kirillov2023segment}, and lifts its outputs onto the Gaussians through per-scene optimisation~\cite{ye2024gaussian,lyu2024gaga, piekenbrinck2025opensplat3d,zhu2025rethinking,lu2025segment}.  This inherits the 2D-to-3D lifting tradition from neural-field predecessors: Semantic-NeRF~\cite{zhi2021place} and Panoptic Lifting~\cite{siddiqui2023panoptic} embed per-frame labels into implicit fields, while Contrastive Lift~\cite{bhalgat2023contrastive} and SA3D~\cite{cen2023segment} lift instance or SAM masks; all are bottlenecked by the slow volumetric rendering of their underlying representation.  Earlier point-cloud methods~\cite{qi2017pointnet,schult2023mask3d} train from scratch on annotated 3D data, and open-vocabulary point-cloud systems~\cite{peng2023openscene,ding2023pla,takmaz2023openmask3d} extend this to CLIP-driven text queries.  A complementary line of work casts the lifting problem as a one-shot assignment: FlashSplat~\cite{shen2024flashsplat} solves it using linear programming with an optimal closed-form solution but operates on a single target mask per query; B$^3$-Seg~\cite{kamata2026b} formulates segmentation as sequential per-query Beta--Bernoulli Bayesian updates, without a persistent multi-instance ontology; Splat Feature Solver~\cite{xiong2025splat} casts feature lifting as a closed-form sparse linear inverse problem, achieving CLIP feature fields at minute-scale per-scene but committing each Gaussian to a single continuous feature without per-primitive label uncertainty; and Occam's LGS~\cite{cheng2024occam} formulates language-feature assignment as a MAP-based weighted multi-view aggregation, training-free but again binding each Gaussian to a single CLIP feature.  Most closely related to our two-stage design, DCSEG~\cite{wiedmann2025dcseg} and ExtrinSplat~\cite{ding2026extrinsplat} also decouple class-agnostic 3D grouping from downstream language assignment via a 2D foundation model, but require per-scene optimisation.  DIGS departs from these paradigms by performing multi-class instance lifting with an explicit per-Gaussian top-$M$ distribution in a single streaming pass, without gradient computation.

\smallskip
\noindent\textbf{Per-Gaussian semantic representation.}\; What each Gaussian stores as semantic state is a decisive design axis.  A first family distils a continuous vision--language feature into every Gaussian, extending the feature-field distillation idea of DFF~\cite{kobayashi2022decomposing}, N3F~\cite{tschernezki2022neural}, and LERF~\cite{kerr2023lerf} to Gaussian primitives: LangSplat~\cite{qin2024langsplat}, LangSplatV2~\cite{li2025langsplatv2}, LEGaussians~\cite{shi2024language}, and 3D-VLGS~\cite{peng20243d} all bind each Gaussian to a CLIP embedding through tens of thousands of training iterations, producing a feature field tied to the training-time language model.  A second family supervises a per-Gaussian identity feature through dense 2D masks in three forms: InfoNCE-style contrastive losses (OpenGaussian~\cite{wu2024opengaussian}, Unified-Lift~\cite{zhu2025rethinking}, and OpenSplat3D~\cite{piekenbrinck2025opensplat3d}), cross-entropy classification over 2D mask IDs (Gaussian Grouping~\cite{ye2024gaussian} and ObjectGS~\cite{zhu2025objectgs}) and 3D-aware memory-bank grouping (Gaga~\cite{lyu2024gaga}).  All of them commit each Gaussian to a single identity mode, enforced across frames by the training loss.  A third option uses a binary Bayesian posterior~\cite{kamata2026b}, but only for one target at a time; this is itself the binary special case of a longer lineage of per-primitive Bayesian semantic aggregation on surfels~\cite{mccormac2017semanticfusion} and voxels~\cite{gan2020bayesian,wilson2024convbki}, whose multi-class Dirichlet--Categorical form we adopt as the starting point of our voting rule. DIGS introduces a fourth option that none of the above provides: an explicit multi-peaked discrete distribution over instance identities, maintained per Gaussian, updated by a closed-form tempered Bayesian rule, and persistent across an arbitrary number of subsequent text queries.

\smallskip
\noindent\textbf{Open-vocabulary querying and foundation models.}\; Recent 3DGS open-vocabulary systems rely on vision--language foundation models such as CLIP~\cite{radford2021learning} and open-set detectors~\cite{liu2024grounding} to connect text prompts with image content; methods that bind these models into the 3D representation at training time inherit a tight coupling, since the choice of language model becomes part of the scene asset itself.  DIGS instead decouples lifting from language entirely: the mapping stage is class-agnostic, and foundation models enter only at the downstream per-instance encoding step, so the same map serves any text-driven query interface without re-running the lifting.

\section{Method}
\label{sec:method}

\subsection{Overview}
\label{sec:overview}

DIGS operates in two decoupled stages on a pre-reconstructed 3DGS scene (Figure~\ref{fig:architecture}). Stage~1 (class-agnostic mapping, \S\ref{sec:scene_repr}--\S\ref{sec:dp_pipeline}) builds a language-free instance ontology by distribution-informed per-Gaussian voting (top-$M$ state under a tempered Dirichlet rule).  Stage~2 (multi-modal instance encoding, \S\ref{sec:language_retriever}) attaches a fused CLIP-visual\,/\,LLM-text embedding to each stable instance.

\begin{figure*}[t]
\centering
\includegraphics[width=\textwidth]{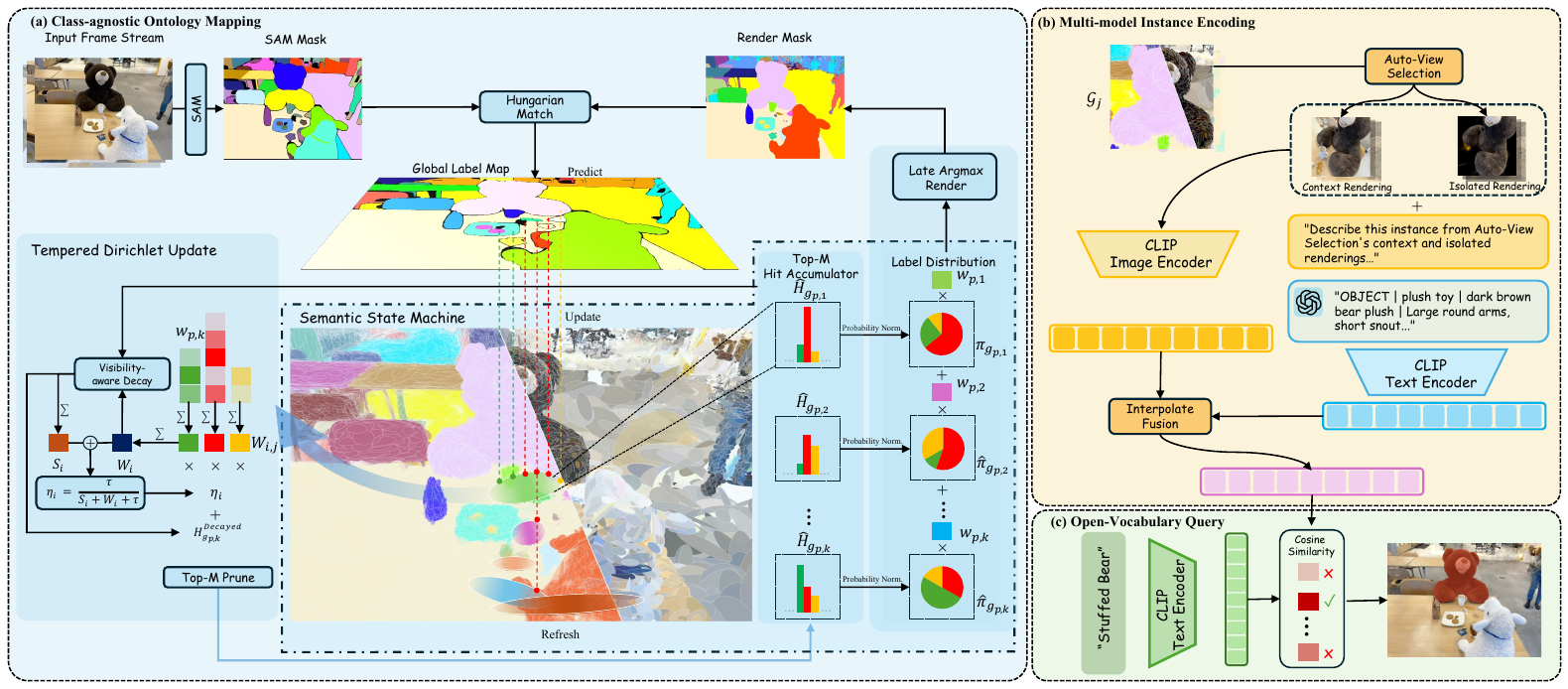}
\caption{DIGS architecture.\; \textbf{(a) Class-agnostic ontology mapping}: per-frame SAM masks are Hungarian-matched to the Late-Argmax-rendered label map; the global label map drives the Tempered Dirichlet Update (injection bound $\eta_i^{(t)}\!\cdot\!W_i^{(t)} < \tau$, Proposition~\ref{prop:bound}) on the semantic state machine, maintaining each Gaussian's top-$M$ Hit Accumulator $\hat{H}_{g_{p,k}}$ and label distribution $\hat{\pi}_{g_{p,k}}$ alongside per-instance footprint $\mathcal{G}_j$ and lifecycle counters.  \textbf{(b) Multi-modal instance encoding}: Auto-View Selection yields paired context\,/\,isolated renderings of $\mathcal{G}_j$, whose CLIP visual embedding is linearly fused with the CLIP-text embedding of an LLM-generated attribute description. \textbf{(c) Open-vocabulary query}: a text prompt is CLIP-text-encoded and matched to the per-instance embeddings by cosine similarity, with no per-query retraining.}
\label{fig:architecture}
\end{figure*}

\subsection{Preliminaries: Scene Representation and Rendering}
\label{sec:scene_repr}

\noindent\textbf{Scene representation.}\; The scene is represented by $N$ 3D Gaussian primitives $\mathcal{G} = \{g_i\}_{i=1}^{N}$, each parameterised by a position $\boldsymbol{\mu}_i \in \mathbb{R}^{3}$, an anisotropic covariance $\boldsymbol{\Sigma}_i \in \mathbb{R}^{3\times 3}$, a scalar opacity $o_i \in [0,1]$, and view-dependent colour $\mathbf{c}_i$ encoded as spherical-harmonic coefficients.  All four are held fixed throughout the semantic pipeline; we augment $g_i$ with a distribution-informed (DI) semantic state: an explicit top-$M$ categorical label distribution per primitive.  Its update rule and consumers are detailed in \S\ref{sec:tempered_dirichlet}--\S\ref{sec:dp_pipeline}.

\smallskip
\noindent\textbf{Top-$K$ geometry-aware rendering.}\; We extend the CUDA rasteriser of 3DGS~\cite{kerbl20233d}, following recent 3DGS segmentation pipelines that expose per-pixel Gaussian contributions during rendering~\cite{shen2024flashsplat,wu2024opengaussian,ye2024gaussian}, to output alongside the rendered RGB image the top-$K$ contributing Gaussian indices and their alpha-compositing weights for every pixel:
\begin{equation}
  \mathcal{G}_p = \bigl\{(g_{p,k},w_{p,k})\bigr\}_{k=1}^{K}, \quad w_{p,k} = o_{p,k}\textstyle\prod_{k'<k}(1 - o_{p,k'}),
  \label{eq:topk_render}
\end{equation}
where $o_{p,k}$ is the effective opacity of $g_{p,k}$ at pixel $p$ and $w_{p,k}$ is the resulting alpha-compositing weight. This top-$K$ correspondence acts as both the back-projection operator for voting and the forward operator for rendering, with no learned matching.

\smallskip
\noindent\textbf{Class-agnostic 2D observation.}\; For each frame, SAM is applied in automatic mask generation mode to produce class-agnostic 2D instance masks $\mathcal{S}^{(t)} = \{\boldsymbol{\Omega}_i\}_{i=1}^{n_s}$.  A mask-NMS post-processing step removes redundant highly-overlapping masks.  These masks carry no category information; class attribution is deferred to the query stage (\S\ref{sec:language_retriever}).

\subsection{Tempered Dirichlet Foundation}
\label{sec:tempered_dirichlet}

Both the state machine (\S\ref{sec:state_machine}) and the voting pipeline (\S\ref{sec:dp_pipeline}) read from a per-Gaussian evidence accumulator $\mathbf{H}_i \in
\mathbb{R}_{\geq 0}^{J}$ updated from rendered SAM masks
across frames ($\mathcal{J}=\{0,\ldots,J-1\}$ is the label registry, with $0$ reserved for background; $J$ is distinct from the per-pixel top-$K$ count of Eq.~\eqref{eq:topk_render}).  Naive additive accumulation $\mathbf{H}_i \mathrel{+}= \mathbf{W}_i^{(t)}$ gives high-weight observations disproportionate influence and permits arbitrarily large evidence increments in one frame. In a closed loop, these increments can reinforce early association errors.  We cap each frame's contribution with a closed-form tempered update.

\smallskip
\noindent\textbf{Tempered update.}\; Motivated by a Dirichlet--Categorical evidence model and generalised Bayesian tempering~\cite{bissiri2016general}, we use a per-Gaussian per-frame learning rate
\begin{equation}
  \eta_i^{(t)} = \frac{\tau}{S_i + W_i^{(t)} + \tau}, \quad
  S_i = \textstyle\sum_j H_{i,j}, \quad W_i^{(t)} = \textstyle\sum_j W_{i,j}^{(t)}
  \label{eq:dirichlet_u}
\end{equation}
yielding the hit-accumulator evolution
\begin{equation}
  H_{i,j} \leftarrow H_{i,j} + \eta_i^{(t)}\,W_{i,j}^{(t)}
  \label{eq:vote_update}
\end{equation}
which executes in a single parallel pass per frame.  The rate depends on $\tau$ and the current evidence, without a hand-designed $\eta$ schedule.

\smallskip
\noindent\textbf{Injection bound.}\; The structural property that underwrites the rest of the method is below:

\begin{proposition}[Injection bound]\label{prop:bound}
For $\tau>0$, $S_i \geq 0$, and $W_i^{(t)} \geq 0$, we have $\;\eta_i^{(t)} \cdot W_i^{(t)} < \tau$.
\end{proposition}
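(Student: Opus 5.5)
The plan is to substitute the definition of the tempered learning rate in Eq.~\eqref{eq:dirichlet_u} directly into the product $\eta_i^{(t)}\,W_i^{(t)}$. This reduces the claim to an elementary inequality between nonnegative reals. Writing $S=S_i$ and $W=W_i^{(t)}$ for brevity, the quantity to bound is
\[
  \eta_i^{(t)}\,W_i^{(t)} \;=\; \tau\cdot\frac{W}{S+W+\tau}.
\]
Since $\tau>0$ is a common positive factor, the statement is equivalent to showing that the fraction $W/(S+W+\tau)$ is strictly less than $1$.

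The steps, in order, are as follows. First, I note that the denominator satisfies $S+W+\tau \geq \tau > 0$ under the hypotheses $S\geq 0$, $W\geq 0$, $\tau>0$. Hence $\eta_i^{(t)}$ is well defined and lies in $(0,1]$, and multiplying or dividing by the denominator preserves inequalities. Second, because $S\geq 0$ and $\tau>0$, the denominator is strictly larger than the numerator:
\[
  S+W+\tau \;\geq\; W+\tau \;>\; W \;\geq\; 0.
\]
Dividing by the positive denominator gives $0 \leq W/(S+W+\tau) < 1$. Third, multiplying by $\tau>0$ yields $\eta_i^{(t)}\,W_i^{(t)} < \tau$, which is the claim.

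For completeness, I will record the slightly sharper form
\[
  \eta_i^{(t)}\,W_i^{(t)} \;\leq\; \frac{\tau W}{W+\tau} \;<\; \tau,
\]
with equality in the first inequality exactly when $S_i=0$. This shows the bound is tight in the limit $W\to\infty$ on a fresh Gaussian, so it cannot be improved to a smaller uniform constant.

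There is no genuine obstacle, since the argument is a one-line monotonicity estimate. The only point needing care is keeping the inequality strict. That strictness comes entirely from $\tau>0$ in the denominator, not from $S_i$, which may be zero. I will also remark that the per-label increments $\eta_i^{(t)}W_{i,j}^{(t)}$ are each bounded by the total, because all $W_{i,j}^{(t)}\geq 0$. Consequently the increase of $S_i$ in one application of Eq.~\eqref{eq:vote_update}, and that of every individual entry $H_{i,j}$, is strictly below $\tau$. This is the form in which the bound is used later.
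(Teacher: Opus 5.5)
Your proof is correct and follows the paper's argument: both reduce the claim to $0 \le W_i^{(t)}/(S_i+W_i^{(t)}+\tau) < 1$ using $S_i+\tau>0$, then multiply by $\tau$. One small correction to your side remark: equality in $\tau W/(S+W+\tau) \le \tau W/(W+\tau)$ holds when $S_i=0$ \emph{or} $W=0$, so the condition is not ``exactly when $S_i=0$''.
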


\smallskip
\noindent\textbf{One-step properties.}\; Proposition~\ref{prop:bound} bounds the evidence injected by each voting step; it does not establish correct association or closed-loop convergence.  At zero stored evidence, the injected mass approaches $\tau$ as the vote grows (Proposition~\ref{prop:scale}).  When stored evidence dominates the current vote and $\tau$, the learning rate decreases approximately as $\tau/S_i$ (Proposition~\ref{prop:inertia}).  Proofs and implementation-dependent qualifications are in Appendix~\ref{app:tempered_propositions}.

\subsection{Bidirectional Two-Level State Machine}
\label{sec:state_machine}
\label{sec:gaussian_state}
\label{sec:lifecycle}
\label{sec:topm_storage}

DIGS maintains a class-agnostic instance ontology at two bidirectionally-coupled levels: each Gaussian $g_i$ carries a top-$M$ label distribution (DI) derived from $\mathbf{H}_i$, and each non-background registry slot $j\in\mathcal{J}\setminus\{0\}$ carries a lifecycle record. Boundary straddling and occlusion fragmentation can split a Gaussian's evidence across multiple instance labels, motivating top-$M$ evidence retention over argmax.

\smallskip
\noindent\textbf{Gaussian-level semantic state.}\; Let $j_{i,1}, \ldots, j_{i,M}$ denote the indices of the $M$ largest entries of $\mathbf{H}_i$ in descending order and let $\mathcal{T}_i = \{j_{i,m}\}_{m=1}^{M}$.  Each $g_i$ stores a hard top-$1$ label $\ell_i$ (set to background $0$ before any evidence is received) and a normalised top-$M$ evidence distribution supported on $\mathcal{T}_i$,
\begin{equation}
  \hat{\pi}_i(j) =
  \begin{cases}
    \dfrac{H_{i,j}\mathds{1}[j\in\mathcal{T}_i]}
          {\sum_{j'\in\mathcal{T}_i}H_{i,j'}},
      & \sum_{j'\in\mathcal{T}_i}H_{i,j'}>0,\\[6pt]
    0, & \text{otherwise},
  \end{cases}
  \label{eq:topm_evidence_dist}
\end{equation}
which sums to one when the Gaussian has positive evidence; an uninitialised Gaussian contributes no instance vote. The distribution retains relative support for multiple labels when evidence is split.  We write $\hat{\pi}_{g_{p,k}}(j) := \hat{\pi}_i(j)\text{ with }i = g_{p,k}$ as the pixel-contributor shorthand used by the Late-Argmax renderer (Eq.~\eqref{eq:late_argmax}). After each voting step we sparsify $\mathbf{H}_i$ to its $M$-support; computing $\eta_i^{(t)}$ from this sparse state preserves the voting-increment bound (Appendix~\ref{app:topm_details}).

\smallskip
\noindent\textbf{Instance lifecycle and footprint.}\; Each discovered instance $j$ is characterised by a \emph{Gaussian-space footprint}
\begin{equation}
  \mathcal{G}_j = \{\,g_i \in \mathcal{G} : \ell_i = j\,\}
  \label{eq:instance_footprint}
\end{equation}
(the subset of primitives committing to $j$), and a lifecycle record with two counters: $\text{match}_j$ (incremented on a successful Hungarian assignment) and $\text{miss}_j$ (incremented when $j$ is visible but unmatched, reset on a match).  When $\text{miss}_j > \theta_{\text{miss}}$, instance $j$ becomes a \emph{zombie}; as the registry approaches capacity $J$, zombies are recycled first (longest-missing preferred), keeping the ID registry within a fixed maximum capacity $J$ independent of sequence length.

\subsection{Distribution-Preserving Voting Pipeline}
\label{sec:dp_pipeline}
\label{sec:association}
\label{sec:update_step}
\label{sec:late_argmax}
\label{sec:closed_loop}

The voting pipeline drives the state machine (\S\ref{sec:state_machine}) forward under streaming observations, preserving the top-$M$ distribution end-to-end.

\smallskip
\noindent\textbf{Predict step.}\; Before observing the current frame, the state machine renders a predicted label map $\hat{L}^{(t)}$ (Eq.~\eqref{eq:late_argmax}) with predicted regions $\{R_j\}$.  SAM masks $\{\boldsymbol{\Omega}_i\}$ are associated with global instance IDs by Hungarian bipartite matching on an IoU cost; an IoU threshold $\theta_{\text{iou}}$ decides new-ID allocation versus match, and a greedy fallback permits many-to-one matching under occlusion-induced fragmentation. The result is the global label map $\mathcal{L}^{(t)} \in \mathbb{Z}^{H\times W}$ consumed by the update step.

\smallskip
\noindent\textbf{Update step.}\; Given $\mathcal{L}^{(t)}$, the tempered Dirichlet update (Eq.~\eqref{eq:vote_update}) refreshes $\mathbf{H}$ in a single parallel pass.  Before voting, a visibility-aware leaky integrator with decay rate $\gamma \in (0,1)$ decays only substantially-visible Gaussians ($W_i^{(t)} > w_{\min}$), so occluded or out-of-view primitives retain their evidence across long traversals.

\smallskip
\noindent\textbf{Late-Argmax rendering.}\; Rather than committing each Gaussian to a hard label before rendering (Appendix~\ref{app:hard_render}), we fuse the full top-$M$ distribution at the pixel level:
\begin{equation}
  \hat{\ell}(p) = \arg\max_{j>0}\,\textstyle\sum_{k=1}^{K} w_{p,k}\,\hat{\pi}_{g_{p,k}}(j).
  \label{eq:late_argmax}
\end{equation}
When every instance score is zero, the output label is background ($0$). Boundary Gaussians, which typically hold comparable evidence for adjacent labels, contribute proportionally to both sides and the final label is decided by the majority vote of neighbouring non-ambiguous Gaussians; this suppresses the staircase boundary artefacts that characterise hard-label rendering.  The output $\hat{L}^{(t)}$ feeds back into the next frame's predict step, closing the loop.

\subsection{Multi-Modal Spatial Instance Representation}
\label{sec:language_retriever}
\label{sec:clip_retrieval}
\label{sec:llm_retrieval}
\label{sec:det_bridge}

Stage~2 attaches a fixed multi-modal embedding to each stable instance: the CLIP visual embedding of its isolated 3D footprint $\mathcal{G}_j$ (Eq.~\eqref{eq:instance_footprint}), fused with the CLIP-text embedding of an LLM-generated attribute description.  Computed once post-mapping, the embeddings support arbitrary downstream retrieval without re-running the voting pipeline.

\smallskip
\noindent\textbf{CLIP visual retrieval.}\; We retain the \emph{stable subset} $\mathcal{I}_\text{stable} = \{j : \text{match}_j \geq
\theta_\text{match} \wedge \text{miss}_j \leq
\theta_\text{miss}\}$ to discard short-lived SAM fragments and
zombie slots.  For each $j \in \mathcal{I}_\text{stable}$ we render $\mathcal{G}_j$ alone (all other Gaussians at zero opacity) from its top-$B$ training views $\mathcal{V}_j$ (ranked by projected area), crop to the instance bounding box, and compute a CLIP visual embedding
\begin{equation}
  \mathbf{e}_j = \frac{\bar{\mathbf{c}}_j}{\|\bar{\mathbf{c}}_j\|_2}, \quad
  \bar{\mathbf{c}}_j = \frac{1}{|\mathcal{V}_j|}\textstyle\sum_{t\in\mathcal{V}_j}
  \text{CLIP}_\text{img}\bigl(\text{Crop}(\text{IsoRender}(\mathcal{G}_j,\text{cam}_t))\bigr)
  \label{eq:clip_image_emb}
\end{equation}
Isolated rendering prevents neighbouring-instance leakage into the embedding.  At query time, the prompt is matched by cosine similarity, with $j^{*} = \arg\max_{j\in\mathcal{I}_\text{stable}}
\cos(\mathbf{e}_j,\,\operatorname{CLIP}_\text{text}(\mathbf{q}))$,
in $O(|\mathcal{I}_\text{stable}|)$ cosines per prompt; the predicted mask is then obtained by indexing the Late-Argmax label map $\hat{L}_\text{test}$ rendered at the test viewpoint $\hat{M}_\mathbf{q} = \mathds{1}[\hat{L}_\text{test} = j^*]$.

\smallskip
\noindent\textbf{LLM-enhanced fusion.}\; A multi-modal LLM provides an attribute-rich text description per instance, CLIP-text-encoded into a complementary language channel that attaches \emph{post hoc} to the language-free ontology.  The same isolated crops as in Eq.~\eqref{eq:clip_image_emb} are passed to the LLM with a fixed prompt, producing description $d_j$ encoded into $\mathbf{e}_j^\text{txt}$; the fused embedding is
\begin{equation}
  \mathbf{e}_j^\text{fused} = \lambda\,\mathbf{e}_j + (1-\lambda)\,\mathbf{e}_j^\text{txt}, \quad \lambda \in [0,1]
  \label{eq:llm_fusion}
\end{equation}
Retrieval uses $\mathbf{e}_j^\text{fused}$ in place of $\mathbf{e}_j$; no second voting pass or re-optimisation is required (\S\ref{sec:component_ablation} quantifies the gain).

\smallskip
\noindent\textbf{Detection-bridging (optional).}\; When the evaluation protocol supplies the test-frame RGB, an open-vocabulary 2D detector can resolve the what, while the 3D instance field supplies the where via IoU matching against $\hat{L}_\text{test}$. The full specification is in Appendix~\ref{app:det_bridge_details}.

\section{Experiments}
\label{sec:experiments}

\subsection{Experimental Setup}
\label{sec:setup}

\noindent\textbf{Datasets.}\; We evaluated on four open-vocabulary 3D segmentation benchmarks chosen to probe complementary aspects of the mapping pipeline. \textbf{LERF-OVS}~\cite{wu2024opengaussian} (4 scenes) follows the OpenGaussian 3D-native protocol with fine-grained, attribute-rich queries on cluttered scenes; it emphasizes 3D open-vocabulary retrieval, requiring that the correct instance be both localised and fully segmented, with mIoU and mAcc@$0.25$ as metrics. \textbf{LERF-Mask}~\cite{ye2024gaussian} (3 scenes) follows the Gaussian-Grouping protocol with test-frame binary masks for short-noun queries; it stresses 2D mask boundary precision, with mIoU and the boundary-sensitive mBIoU as metrics. \textbf{3D-OVS}~\cite{liu20233dovs} (5 scenes) provides $6$--$7$ free-form language queries per scene on simple, spatially separated objects; we follow the original protocol and report mIoU only. \textbf{Replica}~\cite{straub2019replica} (8 scenes) provides long indoor trajectories with revisits and many repeated, near-identical objects; we follow the protocols of Gaga~\cite{lyu2024gaga} and ObjectGS~\cite{zhu2025objectgs}. Across all benchmarks, we follow the corresponding evaluation protocols and match evaluation settings across methods to ensure fair comparisons.

\begin{figure}[t]
  \centering
  \includegraphics[width=\linewidth]{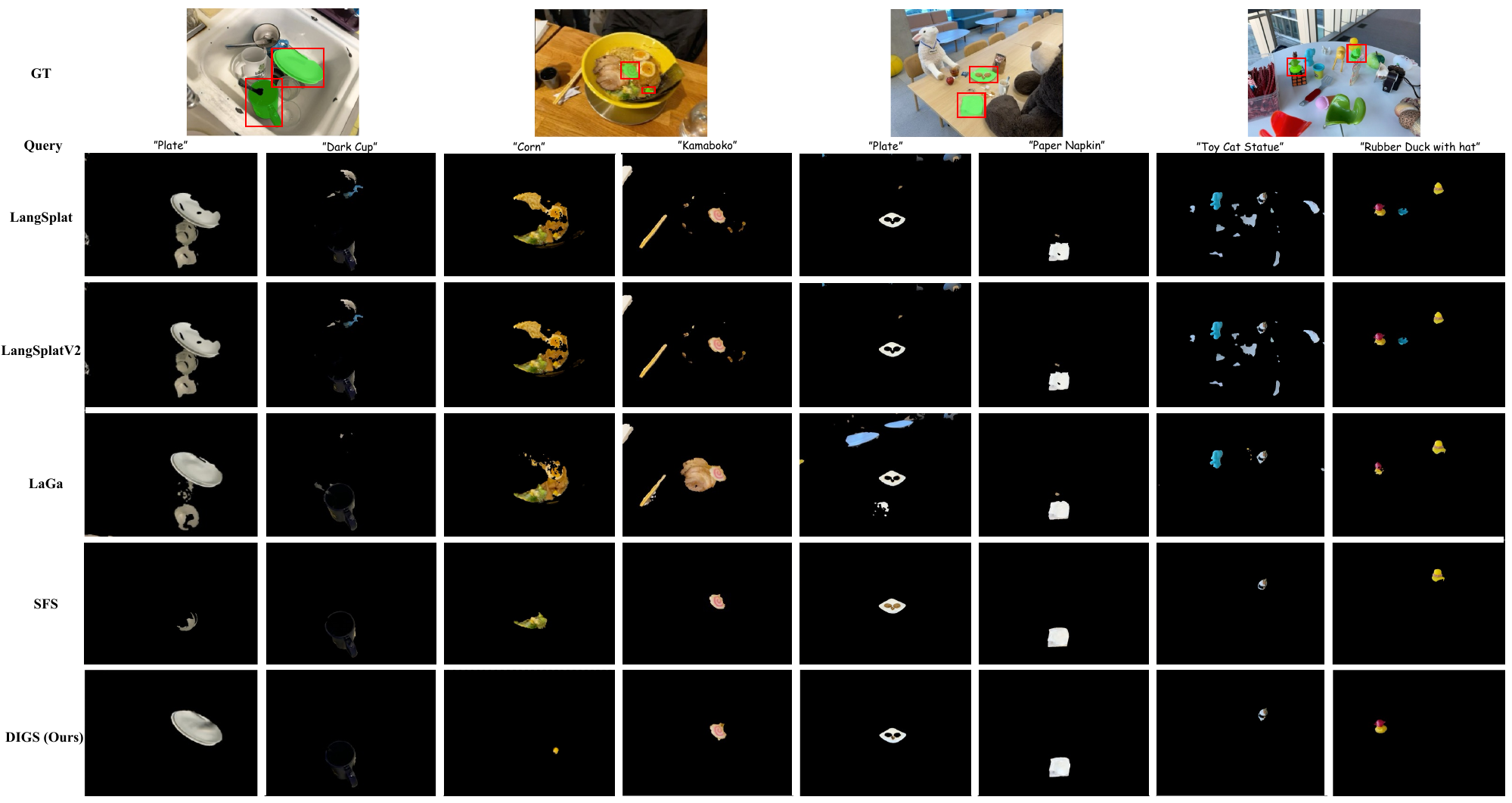}
  \caption{Qualitative comparison on LERF-OVS against LangSplat, LangSplatV2, LaGa, and the training-free comparator SFS.  Two prompts per scene across the four LERF-OVS scenes.}
  \label{fig:lerf_ovs_baseline}
\end{figure}

\smallskip
\noindent\textbf{Implementation details.}\; The 3DGS reconstruction was a fixed Mip-Splatting~\cite{yu2024mip} model trained for $30{,}000$ iterations to fit the RGB scene; the resulting Gaussians were frozen for the rest of the pipeline.  We used SAM ViT-H~\cite{kirillov2023segment} for class-agnostic masks, Open CLIP ViT-B/16~\cite{radford2021learning} for visual/textual embeddings, GroundingDINO Swin-B~\cite{liu2024grounding} for detection-bridging box prompting, and GPT-4.1 for per-instance LLM descriptions. Stage~1 (mapping) ran once per scene with a single hyperparameter set shared across all benchmarks ($\tau{=}5$, $M{=}3$, $\gamma{=}0.95$, $\theta_{\text{iou}}{=}0.4$ ($0.15$ on Replica's longer trajectories), and $\lambda{=}0.7$), producing a reusable instance ontology.  All experiments ran on one NVIDIA A100.

\subsection{Quantitative Results}
\label{sec:main_results}
\leavevmode

\begin{wraptable}{r}{0.5\textwidth}
\vspace{-\intextsep}
\centering
\captionof{table}{Per-scene computational cost on LERF-OVS. Units: Map (min), VRAM (GB), Per-prompt, and 3D-only (ms).}
\label{tab:runtime_cost}
\scriptsize
\setlength{\tabcolsep}{3pt}
\begin{tabular}{lccccc}
\toprule
Method & Train & Map & VRAM & Per-prompt & 3D-only \\
\midrule
LangSplat~\cite{qin2024langsplat}        & $\checkmark$ & 60   & 6.2  & --    & --   \\
LEGaussians~\cite{shi2024language}       & $\checkmark$ & 78   & 11.0 & --    & --   \\
LangSplatV2~\cite{li2025langsplatv2}     & $\checkmark$ & 204  & 16.8 & 71.4  & --   \\
LaGa~\cite{cen2025tackling}              & $\checkmark$ & 225  & 12.5 & 101.7 & 72.8 \\
SFS~\cite{xiong2025splat}                & $\times$     & 5.8  & 13.3 & 161.0 & 8.0  \\
\midrule
DIGS (map)                               & $\times$ & 0.91             & 12.4 & --   & --   \\
(1)\;+CLIP enc.                          & $\times$ & $+0.09$          & 6.6  & --   & --   \\
(2)\;+LLM fus.                           & $\times$ & $+2.2$           & 5.8  & --   & --   \\
\midrule
\textbf{DIGS\,+\,(1)}                    & $\times$ & \textbf{1.00} & \textbf{5.4} & \textbf{28.1} & \textbf{5.4} \\
\textbf{DIGS\,+\,(1)\,+\,(2)}            & $\times$ & \textbf{3.2}  & \textbf{5.4} & \textbf{28.1} & \textbf{5.4} \\
\bottomrule
\end{tabular}
\end{wraptable}
\noindent\textbf{Computational cost.}\; The numbers in Table~\ref{tab:runtime_cost} are per-scene means on a single A100; LangSplat/LEGaussians figures are taken from~\cite{li2025langsplatv2}.  Training-based baselines bind language features via $1$--$4$\,h of per-scene gradient training, whereas DIGS is gradient-free: DIGS\,+\,(1) completes class-agnostic mapping and CLIP encoding in $\mathbf{1.00}$\,min by collapsing the gradient loop into a single closed-form Dirichlet pass per frame ($225\times$ faster than LaGa and $\sim\!6\times$ faster than the closest training-free comparator SFS), and the LLM-fusion variant DIGS\,+\,(1)\,+\,(2) adds $\sim\!2.2$\,min dominated by GPT-4.1 API I/O rather than local compute, still $70\times$ faster than the strongest distillation baseline.  VRAM drops from a $12.4$\,GB transient peak during mapping to a $5.4$\,GB steady state once the per-instance embeddings are computed.  At inference, the pre-computed per-instance embeddings reduce each query to $O(|\mathcal{I}_\text{stable}|)$ cosine similarities (mean $\approx 33$) plus a Late-Argmax label-map lookup, giving $28.1$\,ms (prompt\,$\to$\,2D mask) / $5.4$\,ms (3D-only) latency, $2.5$--$5.7\times$ faster than LangSplatV2/LaGa/SFS.

\begin{table}
\centering
\caption{LERF-OVS (mIoU\,/\,mAcc, \%).  OpenGaussian (OG) 3D-native protocol unless $^{\ddagger}$(LangSplat-family 2D).  $^{*}$re-eval by OG~\cite{wu2024opengaussian}; $^{\dagger}$our reproduction; unmarked: paper-reported.}
\label{tab:lerf_ovs_full}
\small
\setlength{\tabcolsep}{5pt}
\renewcommand{\arraystretch}{1.05}
\resizebox{0.95\textwidth}{!}{
\begin{tabular}{l cc cc cc cc cc}
\toprule
\textbf{Method}
& \multicolumn{2}{c}{\textbf{Figurines}}
& \multicolumn{2}{c}{\textbf{Ramen}}
& \multicolumn{2}{c}{\textbf{Teatime}}
& \multicolumn{2}{c}{\textbf{Waldo\_kitchen}}
& \multicolumn{2}{c}{\textbf{Mean}} \\
\cmidrule(lr){2-3} \cmidrule(lr){4-5} \cmidrule(lr){6-7} \cmidrule(lr){8-9} \cmidrule(lr){10-11}
& mIoU & mAcc
& mIoU & mAcc
& mIoU & mAcc
& mIoU & mAcc
& mIoU & mAcc \\
\midrule
LangSplat$^{*}$~\cite{qin2024langsplat}
& 10.16 & 8.93 & 7.92 & 11.27 & 11.38 & 20.34 & 9.18 & 9.09 & 9.66 & 12.41 \\
LangSplat$^{\ddagger}$~\cite{qin2024langsplat}
& 46.44 & 73.21 & 48.96 & 67.61 & 60.85 & 84.75 & 39.65 & 63.64 & 48.98 & 72.30 \\
LEGaussians$^{*}$~\cite{shi2024language}
& 17.99 & 23.21 & 15.79 & 26.76 & 19.27 & 27.12 & 11.78 & 18.18 & 16.21 & 23.82 \\
OpenGaussian$^{*}$~\cite{wu2024opengaussian}
& 39.29 & 55.36 & 31.01 & 42.25 & 60.44 & 76.27 & 22.70 & 31.82 & 38.36 & 51.43 \\
LaGa~\cite{cen2025tackling}
& 64.10 & --    & 55.60 & --    & 70.90 & --    & 65.60 & --    & 64.00 & --    \\
LaGa$^{\dagger}$~\cite{cen2025tackling}
& 54.43 & 69.30 & 54.86 & 79.50 & 68.53 & 74.90 & 64.37 & 70.80 & 60.55 & 73.63 \\
LangSplatV2$^{\dagger}$~\cite{li2025langsplatv2}
& 3.92 & 0.00 & 11.07 & 11.27 & 13.50 & 16.95 & 12.64 & 9.09 & 10.28 & 9.33 \\
LangSplatV2$^{\ddagger}$~\cite{li2025langsplatv2}
& 57.20 & 71.43 & 48.21 & 59.15 & 69.91 & 84.75 & 57.74 & 68.18 & 58.27 & 70.88 \\
SuperGSeg~\cite{liang2024supergseg}
& 43.68 & 60.71 & 18.07 & 23.94 & 55.31 & 77.97 & 26.71 & 45.45 & 35.94 & 52.02 \\
Splat Feature Solver~\cite{xiong2025splat}
& 67.60 & --    & 62.30 & --    & 68.50 & --    & 62.10 & --    & 65.10 & --    \\
Splat Feature Solver$^{\dagger}$~\cite{xiong2025splat}
& 63.43 & 76.79 & 59.60 & 67.61 & 71.47 & \textbf{88.14} & 54.14 & 72.73 & 62.16 & 76.32 \\
\midrule
DIGS (CLIP)
& 74.81 & 91.07 & 56.15 & 73.24 & 74.70 & 84.75 & 61.72 & 81.82 & 66.85 & 82.72 \\
\textbf{DIGS (LLM+CLIP)}
& \textbf{78.71} & \textbf{92.86}
& \textbf{64.68} & \textbf{80.28}
& \textbf{77.65} & 86.44
& \textbf{66.76} & \textbf{81.82}
& \textbf{71.95} & \textbf{85.35} \\
\bottomrule
\end{tabular}
}
\par\vspace{8pt}
\caption{LERF-Mask (mIoU\,/\,mBIoU, \%) under the Gaussian Grouping protocol.}
\label{tab:lerf_mask_full}
\small
\setlength{\tabcolsep}{5pt}
\renewcommand{\arraystretch}{1.05}
\resizebox{0.95\textwidth}{!}{%
\begin{tabular}{l cc cc cc cc}
\toprule
\textbf{Method}
& \multicolumn{2}{c}{\textbf{Figurines}}
& \multicolumn{2}{c}{\textbf{Ramen}}
& \multicolumn{2}{c}{\textbf{Teatime}}
& \multicolumn{2}{c}{\textbf{Mean}} \\
\cmidrule(lr){2-3} \cmidrule(lr){4-5}
\cmidrule(lr){6-7} \cmidrule(lr){8-9}
& mIoU & mBIoU & mIoU & mBIoU
& mIoU & mBIoU & mIoU & mBIoU \\
\midrule
LERF~\cite{kerr2023lerf}
& 33.5 & 30.6 & 28.3 & 14.7 & 49.7 & 42.6 & 37.2 & 29.3 \\
SA3D~\cite{cen2023segment}
& 24.9 & 23.8 & 7.4 & 7.0 & 42.5 & 39.2 & 24.9 & 23.3 \\
LangSplat~\cite{qin2024langsplat}
& 52.8 & 50.5 & 50.4 & 44.7 & 69.5 & 65.6 & 57.6 & 53.6 \\
GG~\cite{ye2024gaussian}
& 69.7 & 67.9 & 77.0 & 68.8 & 71.7 & 66.1 & 72.8 & 67.6 \\
FlashSplat (U)~\cite{shen2024flashsplat}
& 60.2 & 57.5 & 68.4 & 61.5 & 80.4 & 76.3 & 69.6 & 65.1 \\
FlashSplat (F)~\cite{shen2024flashsplat}
& 71.6 & 69.1 & 71.4 & 66.3 & 86.6 & 83.9 & 76.5 & 73.1 \\
Gaga~\cite{lyu2024gaga}
& 90.7 & 89.0 & 64.1 & 61.6 & 69.3 & 66.0 & 74.7 & 72.2 \\
ObjectGS~\cite{zhu2025objectgs}
& 88.2 & 85.2 & 88.0 & 79.9 & 88.9 & 88.6 & 88.4 & 84.6 \\
Unified-Lift~\cite{zhu2025rethinking}
& -- & -- & -- & -- & -- & -- & 80.9 & 77.1 \\
B$^3$-Seg~\cite{kamata2026b}
& 88.3 & 85.4 & 75.3 & 69.7 & 89.8 & 88.0 & 84.5 & 81.0 \\
\midrule
\textbf{DIGS (Ours)}
& \textbf{92.73} & \textbf{91.02}
& \textbf{90.79} & \textbf{80.04}
& \textbf{91.61} & \textbf{89.67}
& \textbf{91.71} & \textbf{86.91} \\
\bottomrule
\end{tabular}%
}
\end{table}

\noindent\textbf{LERF-OVS.}\; Table~\ref{tab:lerf_ovs_full} reports DIGS at 71.95 mean mIoU and $85.35$ mean mAcc on LERF-OVS under the OpenGaussian 3D-native protocol, exceeding the strongest training-based baseline LaGa by +11.40 mIoU and the closest training-free comparator SFS by +9.79 mIoU; against LangSplatV2$^{\ddagger}$'s strongest reported $58.27$, the gap widens to $+13.68$ (its OG row drops to $10.28$).  The gain decomposes cleanly: DIGS\,(CLIP) alone reaches $66.85$ mean mIoU, isolating the boundary-preserving top-$M$ representation as the primary lever; LLM fusion contributes a further $+5.10$, with DIGS leading on every scene and the largest gap on attribute-dense Figurines. Figure~\ref{fig:lerf_ovs_baseline} visualises two prompts per scene across all four LERF-OVS scenes, exposing two complementary failure modes.  On near-synonymous prompts (e.g., \emph{rubber duck with hat} versus \emph{with buoy}), all CLIP-based baselines including SFS retrieve the wrong instance because their CLIP embeddings collapse to near-identical points, whereas DIGS's LLM-text fusion disambiguates the attribute.  On prompts where SFS does identify the correct instance (e.g., \emph{toy cat statue}), its closed-form lift still yields fuzzy mask edges, whereas DIGS's distribution-informed late-argmax produces tight boundaries.

\noindent\textbf{LERF-Mask.}\; With test-frame RGB available, we evaluated DIGS in detection-bridging mode (\S\ref{sec:det_bridge}).  Pairing the Stage~1 ontology with GroundingDINO$+$SAM isolates mapping quality from the CLIP/LLM query pathway. Table~\ref{tab:lerf_mask_full} shows DIGS at 91.71 mean mIoU and $86.91$ mean mBIoU, +3.3 mIoU over the previous best ObjectGS and leading every prior method on both mIoU and mBIoU across all three scenes; the dominance on the boundary-sensitive mBIoU metric directly evidences the boundary-fidelity gain from top-$M$\,+\,late-argmax, addressing the boundary failure flagged in \S\ref{sec:intro}.

\noindent\begin{minipage}[t]{0.42\textwidth}
\vspace{0pt}
\noindent\textbf{3D-OVS.}\;
In the detection-bridging query mode
(\S\ref{sec:det_bridge}), Table~\ref{tab:3dovs_full}
reports DIGS at 96.45 mean mIoU on 3D-OVS,
the highest mean across all prior methods,
surpassing the strongest training-based baselines
ObjectGS~\cite{zhu2025objectgs} ($96.4$) and
LaGa~\cite{cen2025tackling} ($95.3$) without any per-scene
gradient training.
The strongest methods already exceed $95\%$ mean mIoU on
these spatially separated objects, leaving limited headroom.
Appendix~\ref{app:3dovs_qualitative}
(Figure~\ref{fig:3dovs_qualitative}) visualises multi-query
coverage across all five scenes.
\end{minipage}\hfill
\begin{minipage}[t]{0.55\textwidth}
\vspace{0pt}
\centering
\captionof{table}{Quantitative results on 3D-OVS (mIoU, \%).}
\label{tab:3dovs_full}
\scriptsize
\setlength{\tabcolsep}{3pt}
\resizebox{\linewidth}{!}{%
\begin{tabular}{l cccccc}
\toprule
\textbf{Method}
& \textbf{Bed} & \textbf{Bench} & \textbf{Lawn}
& \textbf{Room} & \textbf{Sofa} & \textbf{Mean} \\
\midrule
LERF~\cite{kerr2023lerf}
& 73.5 & 53.2 & 73.7 & 46.6 & 27.0 & 54.8 \\
3D-OVS~\cite{liu20233dovs}
& 89.5 & 89.3 & 88.2 & 92.8 & 74.0 & 86.8 \\
GG~\cite{ye2024gaussian}
& 83.0 & 91.5 & 90.6 & 85.9 & 87.3 & 87.7 \\
LEGaussians~\cite{shi2024language}
& 84.9 & 91.1 & 92.5 & 86.0 & 87.8 & 88.5 \\
LangSplat~\cite{qin2024langsplat}
& 92.5 & 94.2 & 96.1 & 94.1 & 90.0 & 93.4 \\
LangSplatV2~\cite{li2025langsplatv2}
& 93.0 & 94.9 & 96.6 & 96.1 & 92.3 & 94.6 \\
LaGa~\cite{cen2025tackling}
& 96.8 & 92.8 & \textbf{96.9} & 97.0 & 93.0 & 95.3 \\
ObjectGS~\cite{zhu2025objectgs}
& \textbf{98.0} & \textbf{96.4} & 95.1 & \textbf{97.2} & \textbf{95.4} & 96.4 \\
\midrule
\textbf{DIGS (Ours)}
& 97.63 & 96.18 & 96.86 & 96.40 & 95.15 & \textbf{96.45} \\
\bottomrule
\end{tabular}%
}
\end{minipage}\par\medskip

\noindent\begin{minipage}[t]{0.42\textwidth}
\vspace{0pt}
\noindent\textbf{Replica.}\;
We evaluate all eight scenes under two established protocols.
Gaga's protocol evaluates class-agnostic instance segmentation
on held-out views via Hungarian matching, while ObjectGS's
panoptic-lifting protocol evaluates rendered labels on held-out
views.  The 2D inputs are matched across methods within each
protocol: SAM ViT-H masks for Gaga's and ground-truth masks for
ObjectGS's.
\end{minipage}\hfill
\begin{minipage}[t]{0.55\textwidth}
\vspace{0pt}
\centering
\captionsetup{belowskip=6pt}
\captionof{table}{Replica results under two protocols (\%).}
\label{tab:replica}
\scriptsize
\setlength{\tabcolsep}{2pt}
\resizebox{\linewidth}{!}{%
\begin{tabular}{l c ccc ccc}
\toprule
& & \multicolumn{3}{c}{\textbf{Gaga protocol}}
  & \multicolumn{3}{c}{\textbf{ObjectGS protocol}} \\
\cmidrule(lr){3-5}\cmidrule(lr){6-8}
\textbf{Method} & \textbf{Train} & IoU & Prec. & Rec. & IoU & Dice & Acc. \\
\midrule
GG~\cite{ye2024gaussian}        & $\checkmark$ & 21.76 & 25.00 & 19.72 & 83.36 & 91.84 & 94.70 \\
Gaga~\cite{lyu2024gaga}         & $\checkmark$ & 46.21 & 39.53 & 50.62 & --    & --    & --    \\
ObjectGS~\cite{zhu2025objectgs} & $\checkmark$ & --    & --    & --    & 88.39 & 92.39 & 95.65 \\
\midrule
\textbf{DIGS (Ours)} & $\times$
& \textbf{51.39} & \textbf{42.48} & \textbf{57.53}
& \textbf{91.46} & \textbf{95.15} & \textbf{98.68} \\
\bottomrule
\end{tabular}
}
\end{minipage}\par\medskip

\noindent Table~\ref{tab:replica} shows that DIGS improves mean IoU over Gaga by 5.18 points (51.39 vs.\ 46.21) and over ObjectGS by 3.07 points (91.46 vs.\ 88.39), while also leading on the remaining metrics in both protocols. Appendix~\ref{app:replica_details} reports the per-scene results and mapping statistics.  The gains in complex indoor scenes with repeated, visually similar objects and exploratory camera trajectories support generalization beyond the LERF
benchmarks.\par

\FloatBarrier
\needspace{20\baselineskip}
\subsection{Component Ablation}
\label{sec:component_ablation}

\noindent\begin{minipage}{\textwidth}
  \centering
  \includegraphics[width=\linewidth]{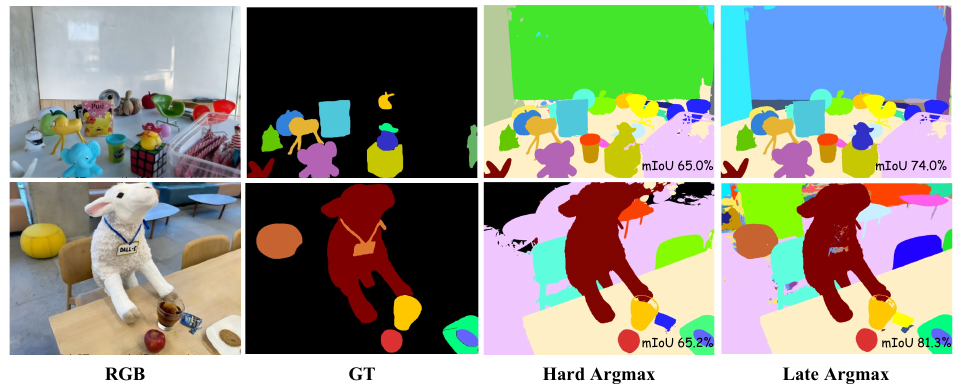}
  \captionof{figure}{Hard-argmax versus Late-Argmax rendering on LERF-OVS.}
  \label{fig:dp_qualitative}
\end{minipage}\par\smallskip
On LERF-OVS, we incrementally enabled stability filter (SF), distribution-preserving (DP) design, and LLM fusion atop a minimal baseline to isolate each component's contribution to retrieval mIoU and verify that the three are complementary.
\begin{table}[t]
\centering
\begin{minipage}[t]{0.57\textwidth}
\centering
\captionof{table}{Component ablation on LERF-OVS (mIoU, \%). SF: stability filter; DP: distribution-preserving design. Tempered Dirichlet is enabled in all rows.}
\label{tab:component_ablation}
\small
\setlength{\tabcolsep}{3pt}
\resizebox{\linewidth}{!}{%
\begin{tabular}{cccccccccc}
\toprule
Row & SF & DP & LLM & Figurines & Ramen & Teatime & Waldo\_kitchen & Mean \\
\midrule
(a) & $\times$      & $\times$      & $\times$      & 64.77 & 32.45 & 60.17 & 50.53 & 51.98 \\
(b) & $\checkmark$ & $\times$      & $\times$      & 65.09 & 42.30 & 72.71 & 51.93 & 58.01 \\
(c) & $\checkmark$ & $\checkmark$ & $\times$      & 74.81 & 56.15 & 74.70 & 61.72 & 66.85 \\
(d) & $\checkmark$ & $\checkmark$ & $\checkmark$ & \textbf{78.71} & \textbf{64.68} & \textbf{77.65} & \textbf{66.76} & \textbf{71.95} \\
\bottomrule
\end{tabular}%
}
\end{minipage}\hfill
\begin{minipage}[t]{0.39\textwidth}
\centering
\captionof{table}{Tempered Dirichlet ablation on Row~(c) of Table~\ref{tab:component_ablation} (mIoU, \%).}
\label{tab:eta_ablation}
\small
\setlength{\tabcolsep}{3pt}
\resizebox{\linewidth}{!}{%
\begin{tabular}{lccc}
\toprule
Scene & Naive $\eta$ & Temp.\ $\eta$ & $\Delta$ \\
\midrule
Figurines      & 68.71 & 74.81 & $+6.10$  \\
Ramen          & 40.94 & 56.15 & $+15.22$ \\
Teatime        & 73.30 & 74.70 & $+1.40$  \\
Waldo\_kitchen & 52.23 & 61.72 & $+9.49$  \\
\midrule
Overall        & 58.80 & \textbf{66.85} & \textbf{$+8.05$}  \\
\bottomrule
\end{tabular}%
}
\end{minipage}
\end{table}
Table~\ref{tab:component_ablation} summarises the contributions.  SF (a\,$\to$\,b) adds $+6.03$ mIoU by discarding $92$--$97\%$ of unstable SAM proposals (Appendix~\ref{app:sf_details}).  DP (b\,$\to$\,c) is the largest single jump (+8.84), preserving multi-peaked label distributions at boundary primitives; Figure~\ref{fig:dp_qualitative} shows the boundary-cleanliness gain on Figurines and Teatime.  LLM fusion (c\,$\to$\,d) adds a further $+5.10$, peaking at $+8.53$ on Ramen where dense visually similar food instances need attribute-level disambiguation; Figure~\ref{fig:llm_vs_clip} illustrates one prompt per scene where the LLM-text channel disambiguates instances that share near-identical CLIP visual embeddings. The three are complementary, taking mean mIoU from $51.98$ to $71.95$.

\smallskip
\noindent\textbf{Tempered Dirichlet.}%
\label{sec:eta_ablation}
\;
Table~\ref{tab:eta_ablation} reports the effect of replacing $\eta_i^{(t)}$ by $\eta{=}1$ on Row~(c) of Table~\ref{tab:component_ablation}: mean mIoU drops by 8.05 points.  The drop is larger on scenes with cross-view SAM instability (Ramen and Waldo\_kitchen, 12.35 points) than on scenes with more stable masks (Figurines and Teatime, 3.75 points).  Per-class results are given in Appendix~\ref{app:eta_per_class_details}.  These gains indicate that tempered evidence updates improve robustness to cross-view mask instability.

\begin{figure}[t]
  \centering
  \includegraphics[width=\linewidth]{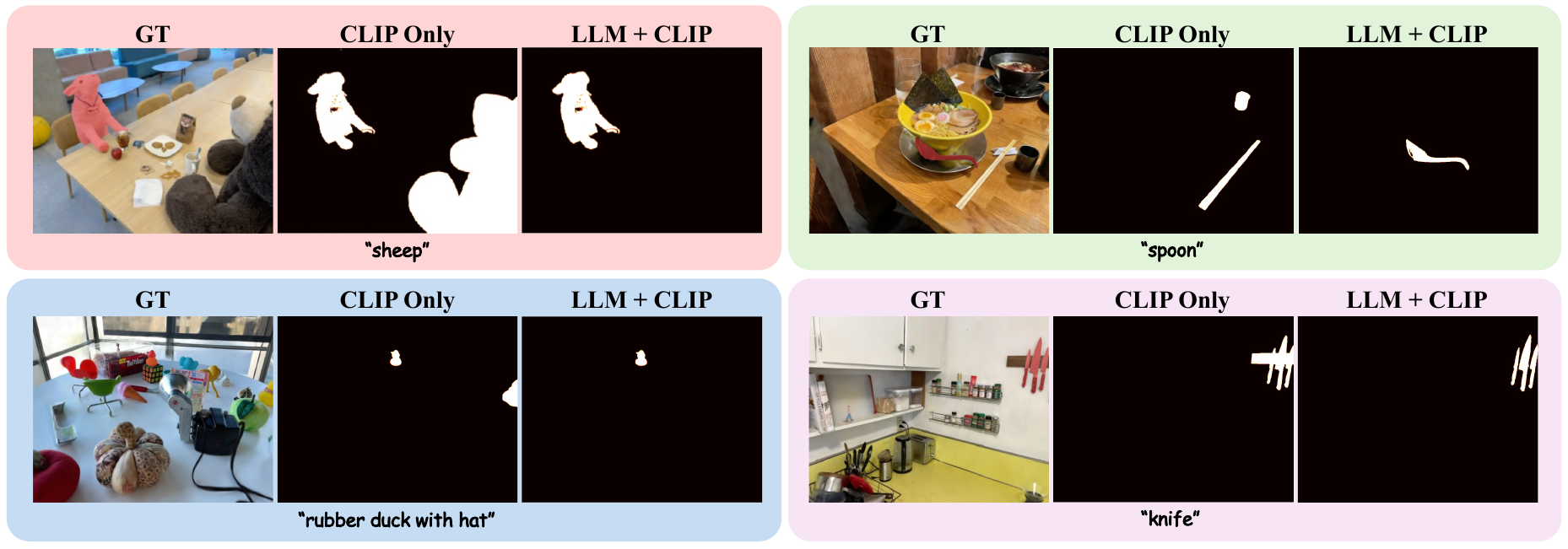}
  \caption{CLIP-only versus LLM-enhanced retrieval on LERF-OVS.}
  \label{fig:llm_vs_clip}
\end{figure}

\needspace{6\baselineskip}
\section{Conclusion}
\label{sec:conclusion}

In this paper, we present DIGS, a training-free framework for open-vocabulary 3D Gaussian segmentation.  Existing methods bind a single semantic feature to each Gaussian via per-scene gradient training, blurring boundary Gaussians and locking each scene to one language model.  We addressed this with two designs.  First, a distribution-informed per-Gaussian representation: each primitive stores a top-$M$ label distribution updated under a tempered Dirichlet rule with a provably bounded per-frame injection, allowing a late-argmax renderer to recover clean instance boundaries.  Second, a streaming voting pipeline that updates a bidirectional state machine into a class-agnostic instance ontology in one closed-form Dirichlet pass per frame, removing per-scene gradient training.  Open-vocabulary querying is a lightweight post-hoc step fusing each instance's CLIP visual embedding with an LLM-generated attribute description. On LERF-OVS, DIGS maps a scene in $1$--$3$\,minutes on a single A100 and serves native 3D queries at $5.4$\,ms latency. It achieves state-of-the-art results on LERF-OVS, LERF-Mask, 3D-OVS, and Replica, while its CLIP-only configuration maps LERF-OVS scenes $225\times$ faster than the strongest training-based baseline.

\smallskip
\noindent\textbf{Limitations and perspectives.}\; DIGS depends on its foundation-model stack: SAM's mask granularity caps the finest discoverable instance, while GroundingDINO, CLIP, and LLM jointly bound linguistic discrimination.  Fusing multi-granularity SAM masks into a hierarchical ontology under the same voting machinery is a natural next step. A second limitation is that the pre-reconstructed Gaussian geometry is fixed throughout the semantic pipeline; jointly refining geometry and instance evidence may further improve boundary fidelity on transparent or thin structures.

\begin{ack}
This work was supported by the UNSW HDR Development and Research Training Grant. We acknowledge computational support from UNSW Sydney's Katana cluster and NCI Australia.
\end{ack}

\bibliographystyle{unsrtnat}
\bibliography{references}

\appendix
\section{Theoretical Derivations}
\label{app:tempered_derivation}

This appendix supplies the proofs and derivations omitted from \S\ref{sec:tempered_dirichlet}.

\subsection{Tempered Dirichlet Update}
\label{app:tempered_derivation_eq}
\label{app:tempered_probabilistic}
\label{app:tempered_failure}

We use a Dirichlet--Categorical model to motivate the evidence update.  Let $\boldsymbol{\theta}_i \sim
\operatorname{Dir}(\boldsymbol{\alpha}_i)$,
$\boldsymbol{\alpha}_i^{(0)} = \tfrac{\tau}{J}\,\mathbf{1}_{J}$, $\tau>0$.  In an untempered, unpruned count model, $\alpha_{i,j}=H_{i,j}+\tau/J$ relates additive counts to the hit accumulator $\mathbf{H}_i$.  In DIGS, however, the ``observations'' are not discrete counts but continuous rendering weights $W_{i,j}^{(t)} = \sum_{p:\,g_i \in \mathcal{G}_p,\,
\mathcal{L}^{(t)}(p)=j} w_{p,i}$, and direct addition is unstable
in two ways: \emph{scale dominance}---a large-scale Gaussian can acquire a disproportionately large increment in one frame---and \emph{closed-loop feedback}, where the rendered label map drives the next-frame association and can reinforce early errors.

Following the framework of generalised Bayesian inference~\cite{bissiri2016general}, in the spirit of safe-Bayesian tempering of misspecified likelihoods~\cite{grunwald2012safe}, we introduce a per-Gaussian per-frame learning rate $\eta_i^{(t)}$.  Within a voting step, after any visibility-aware decay and before top-$M$ pruning, the corresponding tempered pseudo-count update is
\begin{equation}
  \alpha_{i,j}^{(t)}
  = \alpha_{i,j}^{(t-1)}
  + \eta_i^{(t)} \cdot W_{i,j}^{(t)}.
  \label{eq:tempered_update}
\end{equation}
Two design goals motivate our choice of $\eta_i^{(t)}$: (D1) cap the total evidence injected by one frame below $\tau$; (D2) decrease the learning rate as accumulated evidence grows, without a manual schedule.  Both are satisfied by Eq.~\eqref{eq:dirichlet_u}, equivalently $\tau/(S_i+\tau+W_i^{(t)})$, yielding the hit-accumulator update in Eq.~\eqref{eq:vote_update}.  The deployed state also undergoes visibility-aware decay and top-$M$ pruning, so its normalised evidence distribution is not an exact Dirichlet posterior.

\subsection{Proofs of Propositions~\ref{prop:bound}--\ref{prop:inertia}}
\label{app:tempered_propositions}

\begin{proof}[Proof of Proposition~\ref{prop:bound} (Injection bound)]
Since $S_i+\tau>0$, we have $0\leq W_i^{(t)}/(S_i+W_i^{(t)}+\tau)<1$ for every $W_i^{(t)}\geq0$, including $W_i^{(t)}=0$.  Multiplying by $\tau$ gives the result.
\end{proof}

\smallskip
\noindent\textbf{Scope and conditional total-state bound.}\; The one-step bound applies to non-negative votes even from erroneous associations and to retained top-$M$ evidence ($S_i\geq0$); it says nothing about association quality or the net state change after decay and pruning. Let $S_i^{(t)}$ denote retained evidence. If $S_i^{(0)}=0$ and every positive vote triggers decay, then $S_i^{(t)}<\gamma S_i^{(t-1)}+\tau$ on positive-vote frames, while zero-vote frames cannot increase the state; hence $S_i^{(t)}\leq\tau/(1-\gamma)$. The implementation omits decay when $0<W_i^{(t)}\leq w_{\min}$, so no uniform total-state bound holds for arbitrary streams: a fixed one-label vote $W_i^{(t)}=c\in(0,w_{\min}]$ gives $S_i^{(t)}=S_i^{(t-1)}+\tau c/(S_i^{(t-1)}+c+\tau)$, which grows without bound.

\begin{proposition}[Scale suppression]\label{prop:scale}
For fixed $\tau>0$ and $S_i=0$, let $m_i(W)=\eta_i^{(t)}W$ with $W_i^{(t)}=W$. Then $m_i(W)/W\to1$ as $W/\tau\to0$ for $W>0$, and $m_i(W)/\tau\to1$ as $W/\tau\to\infty$.
\end{proposition}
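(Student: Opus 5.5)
Since $S_i=0$, Eq.~\eqref{eq:dirichlet_u} reduces to $\eta_i^{(t)}=\tau/(W+\tau)$, so the injected mass has the explicit form $m_i(W)=\tau W/(W+\tau)$. The plan is to rewrite everything in the single dimensionless variable $x=W/\tau>0$ and then read off both limits from elementary rational functions. No earlier result is needed beyond the definition of $\eta_i^{(t)}$. Proposition~\ref{prop:bound} is only a consistency check: it guarantees $m_i(W)<\tau$, so the second limit is approached from below.

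For the first claim, divide by $W>0$:
\[
\frac{m_i(W)}{W}=\frac{\tau}{W+\tau}=\frac{1}{1+x}.
\]
This tends to $1$ as $x\to0^{+}$ by continuity of $x\mapsto 1/(1+x)$ at $0$. For an explicit rate, note that $1-\tfrac{1}{1+x}=\tfrac{x}{1+x}\le x$, so the relative deviation is at most $W/\tau$.

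For the second claim, divide by $\tau$:
\[
\frac{m_i(W)}{\tau}=\frac{W}{W+\tau}=\frac{x}{1+x}=1-\frac{1}{1+x}.
\]
This tends to $1$ as $x\to\infty$, with gap $1/(1+x)<\tau/W$. Because both expressions depend only on $x$, it does not matter whether $W$ or $\tau$ is varied: the statement fixes $\tau$ and lets $W$ move, but the limits hold along any path with $W/\tau\to0$ or $W/\tau\to\infty$.

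There is no real obstacle here. The only care point is to state the scope clearly. The proposition describes the one-step map at zero stored evidence, in the same spirit as the scope remark after Proposition~\ref{prop:bound}. It says nothing about the state after visibility-aware decay or top-$M$ pruning, and it requires $W>0$ so that the ratio $m_i(W)/W$ is defined.
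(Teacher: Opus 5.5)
Your proof is correct and is essentially the paper's argument: substitute $S_i=0$ to get $m_i(W)=\tau W/(W+\tau)$, then read off the two ratio limits. Your rewrite in $x=W/\tau$ and the explicit error bounds $x/(1+x)\le x$ and $1/(1+x)<\tau/W$ sharpen the same computation without changing it.
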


\begin{proof}
With $S_i=0$, the injected mass is $\tau W_i^{(t)}/(W_i^{(t)}+\tau)$.  Its ratio to $W_i^{(t)}$ tends to one as $W_i^{(t)}/\tau\to0$, while its ratio to $\tau$ tends to one as $W_i^{(t)}/\tau\to\infty$.
\end{proof}

\begin{proposition}[Evidence inertia]\label{prop:inertia}
For fixed $\tau>0$, as $S_i\to\infty$ with $W_i^{(t)}/S_i\to0$, $\eta_i^{(t)}=(\tau/S_i)(1+o(1))$ and $\eta_i^{(t)}W_i^{(t)}=(\tau W_i^{(t)}/S_i)(1+o(1))$. The rate remains positive at every finite state.
\end{proposition}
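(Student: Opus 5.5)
The plan is to substitute the definition of $\eta_i^{(t)}$ from Eq.~\eqref{eq:dirichlet_u} and carry out a direct asymptotic expansion. Writing $W=W_i^{(t)}$ and $S=S_i$, we have
\[
\eta_i^{(t)}=\frac{\tau}{S+W+\tau}=\frac{\tau}{S}\cdot\frac{1}{1+W/S+\tau/S}.
\]
The whole argument then reduces to showing that the second factor is $1+o(1)$ in the stated limit.

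The key steps, in order, are as follows.
\begin{enumerate}
\item Note that $\tau$ is fixed and $S\to\infty$, so $\tau/S\to0$. Together with the hypothesis $W/S\to0$, this gives $\epsilon:=W/S+\tau/S\to0$.
\item Since $\epsilon\ge0$ (because $W\ge0$ and $\tau>0$), the denominator $1+\epsilon$ is at least $1$. Hence $1/(1+\epsilon)$ is well defined.
\item By continuity of $x\mapsto1/(1+x)$ at $0$, we get $1/(1+\epsilon)=1+o(1)$. This yields $\eta_i^{(t)}=(\tau/S_i)(1+o(1))$.
\item Multiplying both sides by $W$ gives $\eta_i^{(t)}W=(\tau W/S)(1+o(1))$ immediately. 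The identity is exact before taking the limit, so this holds even when $W=0$: both sides then vanish.
\end{enumerate}

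Positivity at every finite state follows because, for $S\ge0$ and $W\ge0$ finite, the denominator $S+W+\tau$ is a finite positive number. The quotient $\tau/(S+W+\tau)$ is therefore strictly positive. Put differently, the learning rate decays but never freezes.

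There is essentially no obstacle here. The only point needing care is the interpretation of the limit: $S_i$ and $W_i^{(t)}$ should be read as a sequence or net of states along which both $S\to\infty$ and $W/S\to0$. The $o(1)$ term is then uniform in the sense of depending only on $\epsilon$, which makes the statement precise without any extra assumptions on how $W$ scales.
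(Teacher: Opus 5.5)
Your proposal is correct and follows essentially the same route as the paper's proof: factor $\tau/S_i$ out of Eq.~\eqref{eq:dirichlet_u}, note that the remaining factor $1/(1+(W_i^{(t)}+\tau)/S_i)$ tends to one, multiply by $W_i^{(t)}$, and read off positivity from the positive denominator. Your extra remarks on the $W_i^{(t)}=0$ case and on reading the limit along a sequence of states are harmless refinements that the paper leaves implicit.
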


\begin{proof}
Dividing the denominator of Eq.~\eqref{eq:dirichlet_u} by $S_i$ gives $\eta_i^{(t)}=(\tau/S_i)/(1+(W_i^{(t)}+\tau)/S_i)$. The denominator tends to one under the stated conditions. At finite $S_i$ and $W_i^{(t)}$, it is positive, so $\eta_i^{(t)}>0$.
\end{proof}

Together, Propositions~\ref{prop:bound}--\ref{prop:inertia} characterise bounded injection, suppression of large votes at zero evidence, and attenuation under accumulated evidence.  A positive learning rate at finite evidence permits later revision, but does not guarantee correction, top-$1$ stability, correct association, or closed-loop convergence.

\section{Implementation Details}
\label{app:method_details}

\subsection{Pipeline Pseudocode}
\label{app:algorithm}

Algorithm~\ref{alg:pipeline} summarises the mapping and encoding stages of DIGS.  Stage~1 builds the class-agnostic instance ontology under the tempered Dirichlet voting rule; Stage~2 computes per-instance multi-modal embeddings once, post-mapping. The returned assets $\{\mathbf{e}_j^{\text{fused}},\mathcal{G}_j,\ell_i\}$ are reusable across arbitrary downstream prompts.

\begin{algorithm}[h]
\caption{DIGS Pipeline: Mapping and Encoding}
\label{alg:pipeline}
\begin{algorithmic}[1]
\Statex \textbf{Input:} pre-reconstructed Gaussians $\mathcal{G}$,
         frame stream $\{(\text{cam}_t, \text{rgb}_t)\}_{t=1}^{T}$,
         SAM, CLIP, multi-modal LLM
\Statex \textbf{Parameters:} $\tau,\,\gamma,\,\theta_{\text{iou}},\,w_{\min},\,M,\,\theta_{\text{miss}},\,J,\,\theta_{\text{match}},\,\lambda$
\Statex \textbf{Init:} $\mathbf{H} \leftarrow \mathbf{0}_{N\times J}$;\;
         $\{\ell_i\} \leftarrow \mathbf{0}_N$;\;
         $\text{match}, \text{miss} \leftarrow \mathbf{0}$
\Statex \textit{// -- Stage 1: Class-agnostic mapping --}
\For{each frame $t = 1 \ldots T$}
  \State $\text{rgb}_t,\;\text{topk\_ids},\;\text{topk\_w}
         \leftarrow \text{TopKRender}(\mathcal{G},\,\text{cam}_t)$
         \Comment{Eq.~\eqref{eq:topk_render}}
  \State $\hat{L}^{(t)}
         \leftarrow \text{LateArgmaxRender}(\text{topk\_ids},\,\text{topk\_w},\,\mathbf{H})$
         \Comment{Eq.~\eqref{eq:late_argmax}}
  \State $\mathcal{S}^{(t)} \leftarrow \text{SAM}(\text{rgb}_t)$
  \State $C \leftarrow \text{BatchIoU}(\mathcal{S}^{(t)},\,\hat{L}^{(t)})$;\;
         $\sigma^{*} \leftarrow \text{Hungarian}(1 - C)$
  \State $\phi \leftarrow \text{AssignGlobalIDs}(\sigma^{*},\,C,\,\theta_{\text{iou}})$;\;
         $\mathcal{L}^{(t)} \leftarrow \text{Remap}(\mathcal{S}^{(t)},\,\phi)$
  \State Update $\text{match}, \text{miss}$ counters from $\sigma^{*}$
  \State $W^{(t)} \leftarrow \texttt{scatter\_add}(\text{topk\_w},\;\text{topk\_ids})$
  \State $\mathbf{H}[\,W^{(t)} > w_{\min}\,] \;\mathrel{*}=\; \gamma$
         \Comment{Eq.~\eqref{eq:visible_decay}}
  \State $\eta \leftarrow \tau \,/\, (\mathbf{H}.\text{sum}(\dim{=}1) + W^{(t)} + \tau)$
         \Comment{Eq.~\eqref{eq:dirichlet_u}}
  \State $\mathbf{H}.\texttt{scatter\_add\_}(
           \text{topk\_ids},\;
           \text{topk\_w} \cdot \eta[\text{topk\_ids}],\;
           \mathcal{L}^{(t)})$
         \Comment{Eq.~\eqref{eq:vote_update}}
  \State $\{\ell_i\} \leftarrow \arg\max(\mathbf{H},\dim{=}1)$;\;
         $\mathbf{H} \leftarrow \text{TopM\_Prune}(\mathbf{H},M)$
  \State $\text{Recycle}(\mathbf{H},\,\text{miss},\,\theta_{\text{miss}})$
\EndFor
\Statex \textit{// -- Stage 2: Multi-modal instance encoding --}
\State $\mathcal{I}_{\text{stable}} \leftarrow
       \{j : \text{match}_j \geq \theta_{\text{match}} \;\wedge\; \text{miss}_j \leq \theta_{\text{miss}}\}$
\For{each $j \in \mathcal{I}_{\text{stable}}$}
  \State $\mathcal{V}_j \leftarrow \text{SelectTopViews}(j,\,\text{train\_cams},\,\{\ell_i\})$
  \State $\text{crops}_j \leftarrow \text{BBoxCrop}(\text{IsoRender}(\mathcal{G}_j,\,\mathcal{V}_j))$
  \State $\mathbf{e}_j \leftarrow \ell_2\text{-norm}(\text{Mean}(\text{CLIP\_img}(\text{crops}_j)))$
         \Comment{Eq.~\eqref{eq:clip_image_emb}}
  \State $d_j \leftarrow \text{LLM}(\text{crops}_j,\,\text{attribute\_prompt})$
  \State $\mathbf{e}_j^{\text{txt}} \leftarrow \text{CLIP\_text}(d_j)$
  \State $\mathbf{e}_j^{\text{fused}} \leftarrow
         \lambda\,\mathbf{e}_j + (1-\lambda)\,\mathbf{e}_j^{\text{txt}}$
         \Comment{Eq.~\eqref{eq:llm_fusion}}
\EndFor
\State \Return $\{\mathbf{e}_j^{\text{fused}},\mathcal{G}_j\}_{j\in\mathcal{I}_{\text{stable}}},\;\{\ell_i\}$
\end{algorithmic}
\end{algorithm}

\subsection{Top-\texorpdfstring{$M$}{M} State, Sparsification, and Rendering}
\label{app:topm_details}
\label{app:decay_details}
\label{app:hard_render}

\noindent\textbf{Top-$M$ state.}\; Let $j_{i,1}, \ldots, j_{i,M}$ index the $M$ largest entries of $\mathbf{H}_i$ in descending order.  Define
\begin{align}
  \mathcal{T}_i &= \{j_{i,1},\ldots,j_{i,M}\}\subseteq\mathcal{J},
  \label{eq:topm_set}\\
  \ell_i &=
  \begin{cases}
    j_{i,1}, & S_i>0,\\
    0, & S_i=0,
  \end{cases}
  \label{eq:top1_label}\\
  \pi_{i,m} &= H_{i,\,j_{i,m}} \big/ {\textstyle\sum_{j\in\mathcal{T}_i}} H_{i,j},
  \quad m=1,\ldots,M,\;S_i>0,
  \label{eq:topm_dist}\\
  \hat{H}_i(j) &= H_{i,j}\,\mathds{1}[j\in\mathcal{T}_i],
  \label{eq:topm_evidence}
\end{align}
For $S_i>0$, $\hat{\pi}_i(j)=\hat{H}_i(j)/\!\sum_{j'}\hat{H}_i(j')$; otherwise $\hat{\pi}_i(j)=0$, as in Eq.~\eqref{eq:topm_evidence_dist}.  When $M\geq2$ and $S_i>0$, the gap $\pi_{i,1}-\pi_{i,2}$ is a heuristic measure of ambiguity among retained labels, not calibrated posterior uncertainty. After each voting step we zero entries outside $\mathcal{T}_i$ and refresh $\ell_i$ according to Eq.~\eqref{eq:top1_label}.  Top-$M$ pruning is a sparsity--fidelity heuristic, not a Bayesian operation: it breaks conjugacy on truncated coordinates, and a label leaving $\mathcal{T}_i$ loses its stored evidence.  The per-frame injection bound remains valid when $\eta_i^{(t)}$ is computed from the truncated $S_i$, because that sum is non-negative.  The bound concerns the injected vote, not state changes due to decay or pruning.  $M{=}3$ covers the dominant multi-peaked cases (boundary Gaussians with two adjacent labels; over-segmentation transitions).  The implementation uses a dense $N\times J$ tensor for scatter updates but retains at most $M$ nonzero evidence entries per Gaussian; $M$ controls evidence support rather than the allocated tensor size.

\smallskip
\noindent\textbf{Visibility-aware decay.}\; A uniform decay $\mathbf{H}\leftarrow\gamma\mathbf{H}$ would erode occluded primitives whose evidence cannot be refreshed. We therefore decay only sufficiently visible Gaussians:
\begin{equation}
  H_{i,j} \leftarrow
  \begin{cases}
    \gamma \cdot H_{i,j} & W_i^{(t)} > w_{\min},\\
    H_{i,j}              & \text{otherwise,}
  \end{cases}
  \quad \gamma\in(0,1),\;\gamma{=}0.95,\;w_{\min}>0.
  \label{eq:visible_decay}
\end{equation}

\smallskip
\noindent\textbf{Hard-label rendering.}\; Given discrete $\{\ell_i\}$ (no top-$M$ distribution), the rendered label at pixel $p$ is
\begin{equation}
  \hat{\ell}(p)
  = \arg\max_{j > 0}\;
    \sum_{k=1}^{K} w_{p,k}\,\mathds{1}[\ell_{g_{p,k}} = j],
  \label{eq:hard_render}
\end{equation}
which Eq.~\eqref{eq:late_argmax} strictly generalises (Singleton support reduces to Eq.~\eqref{eq:hard_render}).

\subsection{Association and Lifecycle}
\label{app:association_details}
\label{app:lifecycle_details}

\noindent\textbf{Hungarian association.}\; The IoU cost matrix $C_{i,j} = |\boldsymbol{\Omega}_i \cap R_j|/|\boldsymbol{\Omega}_i \cup R_j|$ between SAM masks $\{\boldsymbol{\Omega}_i\}$ and predicted regions $\{R_j\}$ is computed as a single batched tensor operation; the Hungarian algorithm solves $\sigma^*=\arg\min_\sigma\sum_i (1-C_{i,\sigma(i)})$ with
\begin{equation}
  \phi(i) =
  \begin{cases}
    \text{ID}(R_{\sigma^*(i)}) & \text{if}\; C_{i,\sigma^*(i)} \geq \theta_\text{iou},\\
    \text{NewID}() & \text{otherwise.}
  \end{cases}
\end{equation}
A greedy fallback inherits the best overlapping global ID if $\max_j C_{i,j}\geq\theta_\text{iou}$, permitting many-to-one matching under occlusion-induced fragmentation.  We use $\theta_\text{iou}{=}0.4$.

\smallskip
\noindent\textbf{Lifecycle and zombie recycling.}\; Each instance traverses four states: \emph{nascent} (just created), \emph{active} (matched each frame), \emph{missing} (unmatched within $\theta_\text{miss}$ window), and \emph{zombie} ($\text{miss}_j>\theta_\text{miss}$).  When the registry approaches capacity $J$, zombies are recycled first (longest-missing preferred), followed by all-zero columns; recycling zeroes the column and returns the ID to the pool, so $|\mathcal{I}|\leq J$ regardless of sequence length, unlike B$^3$-Seg~\cite{kamata2026b} (single query object per run) or Gaussian Grouping~\cite{ye2024gaussian} (single batched optimisation, no streaming).

\subsection{Detection-Bridging Query}
\label{app:det_bridge_details}

When the test-frame RGB is available, we localise the prompt $\mathbf{q}$ with GroundingDINO~\cite{liu2024grounding}, obtain a 2D mask $M_\text{det}$ via SAM with box prompting, and match to the active registry by IoU:
\begin{equation}
  j^{*} = \arg\max_{j\in\mathcal{I}}\;
  \operatorname{IoU}\!\bigl(M_\text{det},\,
       \{\hat{L}_\text{test}=j\}\bigr),
  \quad
  \hat{M}_\mathbf{q} = \mathds{1}[\hat{L}_\text{test} = j^{*}],
\end{equation}
with rejection threshold $\theta_\text{bridge}$ guarding against detector misfires or off-scene prompts.  No retraining or embedding precomputation is required.

\subsection{LLM Description Cache}
\label{app:llm_cache}

On the first mapping run for a scene, the optional LLM-fusion stage issues exactly one GPT-4.1 vision-language API call per stable instance $j\in\mathcal{I}_\text{stable}$ with paired context\,/\,isolated view crops as input.  The model returns a single line of text with three pipe separators (\textsc{LABEL\,|\,CATEGORY\,|\,IDENTITY\,|\,DETAILS}), which is written to a per-scene description cache keyed by integer instance ID.  For all subsequent evaluation runs on the same scene we load this cache directly---no further API calls are issued---so that ablations, hyperparameter sweeps, and re-evaluations share identical per-instance descriptions and remain reproducible.  At fusion time the cached line is CLIP-text-encoded into $\mathbf{e}_j^\text{txt}$ for use in Eq.~\eqref{eq:llm_fusion}.  We release the description cache alongside the code so that all numerical results in the paper can be reproduced without new API calls.

\paragraph{Prompt template.}
The text passed to GPT-4.1 alongside the rendered crops is reproduced verbatim below.

{\footnotesize
\begin{verbatim}
You are given rendered images of ONE 3D instance extracted from a
reconstructed scene. The images come in pairs from multiple viewpoints:
- Odd-numbered images (1st, 3rd, 5th, ...): scene crops centered on the
  target object, rendered from the full 3D reconstruction. Other objects
  may appear at the edges.
- Even-numbered images (2nd, 4th, 6th, ...): the same viewpoints but
  with ONLY the target object rendered on a black background. This
  shows the exact boundary and shape of the target.

Use the scene crops to understand context, recognize characters or
brands. Use the black-background images to determine the object's true
shape, color, and boundary. Describe ONLY the target object visible in
the black-background images.

Output exactly ONE line with three pipe separators:

LABEL | CATEGORY | IDENTITY | DETAILS

LABEL -- one of:
- OBJECT:     a complete, well-formed item visible across all views.
- FRAGMENT:   partial, blurry, cut-off, or too distorted to name
              confidently.
- BACKGROUND: wall, floor, table, pillar, or other large flat surface.
- AMBIGUOUS:  multiple unrelated objects merged, or unrecognizable.

CATEGORY -- 1-2 bare nouns, no color, no adjectives
            (e.g. duck, chair, bag, camera).

IDENTITY -- exactly 3-4 words: [color] [object name] or
            [color] [proper name] [object type]. Use the simplest, most
            common noun. If you recognize a specific character or brand,
            include its name. Do NOT add extra modifiers, materials, or
            adjectives beyond color and name.

DETAILS  -- one sentence, 10-20 words. Only factual visual features:
            shape, markings, text, accessories. No guessing, no
            function, no subjective words.

Calibration examples (NOT from the actual scene):
OBJECT    | plush toy   | orange tabby plush toy
                        | Green glass eyes, stitched nose, red bow on neck.
OBJECT    | teapot      | blue ceramic teapot
                        | White floral pattern, curved bamboo handle, round lid.
FRAGMENT  | edge        | brown wooden edge
                        | Only one corner visible, rest cut off by frame.
BACKGROUND| surface     | white countertop surface
                        | Smooth matte finish with light scratches near edge.
AMBIGUOUS | shapes      | mixed colorful shapes
                        | Multiple objects merged with heavy blur, no clear boundary.

Rules:
1. IDENTITY must be exactly 3-4 words. No more. Start with color, end
   with object name.
2. Use the simplest, most common everyday noun. Prefer short familiar
   words over technical or formal synonyms.
3. Only describe the object in the black-background images. Ignore
   everything else in scene crops.
4. When uncertain, choose FRAGMENT over OBJECT.
5. BACKGROUND is only for flat structural surfaces with no distinct 3D
   shape. If it has depth, edges, or a recognizable form, use OBJECT.
6. One line only. No line breaks, no markdown, no JSON, no quotes.
   Exactly three pipes.
\end{verbatim}
}

\section{Replica: Per-Scene Results and Mapping Scale}
\label{app:replica_details}

Tables~\ref{tab:replica_gaga_details} and \ref{tab:replica_objectgs_details} expand the two protocols summarised in Table~\ref{tab:replica}.  Under Gaga's protocol, \emph{raw} counts active instances at the end of mapping and \emph{stable} counts those retained by the stability filter. Under ObjectGS's protocol, every scene uses 1,750 mapping frames and 250 held-out test frames; mapping time is measured on one NVIDIA A100.

\par\medskip\noindent\begin{minipage}{\textwidth}
\centering
\captionof{table}{Replica under Gaga's instance protocol (segmentation metrics in \%).}
\label{tab:replica_gaga_details}
\small
\setlength{\tabcolsep}{5pt}
\begin{tabular}{l c c c c c c}
\toprule
\textbf{Scene} & \textbf{Mapped frames} & \textbf{Raw} & \textbf{Stable}
& \textbf{IoU} & \textbf{Precision} & \textbf{Recall} \\
\midrule
office0 & 180 & 224 & 92  & 42.85 & 31.25 & 50.00 \\
office1 & 133 & 178 & 48  & 55.05 & 52.94 & 61.36 \\
office2 & 164 & 268 & 98  & 52.84 & 42.55 & 57.97 \\
office3 & 180 & 313 & 112 & 48.94 & 38.94 & 53.66 \\
office4 & 161 & 427 & 79  & 53.94 & 35.79 & 60.71 \\
room0   & 180 & 386 & 98  & 44.92 & 43.43 & 46.24 \\
room1   & 180 & 210 & 78  & 60.12 & 48.15 & 72.22 \\
room2   & 180 & 243 & 77  & 52.44 & 46.75 & 58.06 \\
\midrule
\textbf{Mean} & 170 & 281 & 85
& \textbf{51.39} & \textbf{42.48} & \textbf{57.53} \\
\bottomrule
\end{tabular}
\end{minipage}

\par\vspace{10pt}\noindent\begin{minipage}{\textwidth}
\centering
\captionof{table}{Replica under ObjectGS's panoptic-lifting protocol (segmentation metrics in \%).}
\label{tab:replica_objectgs_details}
\small
\setlength{\tabcolsep}{6pt}
\begin{tabular}{l c c c c c}
\toprule
\textbf{Scene} & \textbf{Instances} & \textbf{Mapping time (min)}
& \textbf{IoU} & \textbf{Dice} & \textbf{Accuracy} \\
\midrule
office0 & 34 & 5.7 & 92.84 & 96.05 & 98.87 \\
office1 & 26 & 5.6 & 95.40 & 97.50 & 99.29 \\
office2 & 29 & 5.9 & 90.01 & 94.17 & 98.64 \\
office3 & 58 & 6.4 & 88.97 & 93.80 & 98.23 \\
office4 & 20 & 6.1 & 92.76 & 95.86 & 99.15 \\
room0   & 42 & 7.3 & 88.79 & 93.54 & 97.53 \\
room1   & 38 & 7.4 & 91.95 & 95.33 & 99.09 \\
room2   & 34 & 6.2 & 90.92 & 94.97 & 98.63 \\
\midrule
\textbf{Mean} & 35 & \textbf{6.3}
& \textbf{91.46} & \textbf{95.15} & \textbf{98.68} \\
\bottomrule
\end{tabular}
\end{minipage}\par\medskip

The longer ObjectGS streams take 5.6--7.4 minutes per scene, averaging 6.3 minutes (about 215 ms per mapped frame).

\section{Hyperparameter Sensitivity and Ablations}
\label{app:hp_sensitivity}

We probe the three deployed hyperparameters---LLM/CLIP fusion weight $\lambda$ (\S\ref{sec:llm_retrieval}), top-$M$ support size (\S\ref{sec:gaussian_state}), and Dirichlet prior strength $\tau$ (\S\ref{sec:tempered_dirichlet})---on LERF-OVS at the CLIP-best configuration (Stability Filter on, Distribution-Preserving on; row~(c) of Table~\ref{tab:component_ablation}), holding all other hyperparameters fixed.  $\lambda{=}1$ recovers CLIP-only retrieval; $M{=}1$ collapses Late-Argmax to hard-label rendering (Eq.~\eqref{eq:hard_render}).  Smaller $\tau$ tightens the injection bound (Proposition~\ref{prop:bound}); larger $\tau$ approaches additive accumulation.

\subsection{LLM/CLIP Fusion Weight \texorpdfstring{$\lambda$}{lambda}}
\label{app:lambda_ablation}

\paragraph{$\lambda$ sensitivity.}
Table~\ref{tab:lambda_sensitivity} sweeps $\lambda \in
\{0.0, 0.2, 0.3, 0.5, 0.7, 0.8, 1.0\}$ on LERF-OVS with
retrieval thresholds disabled.  Both extremes degrade---$\lambda{=}0$ (text-only) drops to $49.16$ because the LLM-text channel alone cannot localise instance footprints, $\lambda{=}1$ (CLIP-only) reaches only $61.12$---and the global default $\lambda{=}0.7$ peaks at mean mIoU $64.34$.  Per-scene optima diverge: Figurines prefers the CLIP-heavy end ($\lambda \geq 0.8{:}\,80.01$), Ramen prefers balanced fusion ($\lambda{=}0.5{:}\,60.40$), and Teatime saturates at the default ($77.65$).

\begin{table}[h]
\centering
\caption{$\lambda$ sweep on LERF-OVS, embedding-fusion path with retrieval thresholds disabled (mIoU, \%).}
\label{tab:lambda_sensitivity}
\small
\setlength{\tabcolsep}{6pt}
\begin{tabular}{c ccccc}
\toprule
$\lambda$ & Figurines & Ramen & Teatime & Waldo\_kitchen & Mean \\
\midrule
0.0 & 67.61 & 43.23 & 48.66 & 37.12 & 49.16 \\
0.2 & 75.97 & 43.25 & 52.58 & 37.30 & 52.28 \\
0.3 & 75.97 & 51.69 & 52.58 & 37.30 & 54.39 \\
0.5 & 75.97 & 60.40 & 60.26 & 41.02 & 59.41 \\
0.7 & 77.60 & 58.38 & 77.65 & 43.75 & 64.34 \\
0.8 & 80.01 & 50.30 & 77.65 & 43.75 & 62.93 \\
1.0 & 80.01 & 47.94 & 71.35 & 45.19 & 61.12 \\
\bottomrule
\end{tabular}
\end{table}

\paragraph{Retrieval thresholds.}
Table~\ref{tab:threshold_ablation} sweeps the top-1\,/\,top-2 margin filter $\Delta_{\min}$ at fixed absolute fused-cosine floor $\sigma_{\text{abs}}{=}0.25$ and $\lambda{=}0.7$. $\Delta_{\min}{=}0$ recovers the unfiltered baseline at $64.34$ mean mIoU; raising $\Delta_{\min}$ monotonically improves the mean up to the deployed $\Delta_{\min}{=}0.12$ ($71.95$, $+7.61$ over no filter), beyond which over-aggressive filtering rejects valid matches and the mean drops to $60.42$ at $\Delta_{\min}{=}0.20$.  The deployed setting concentrates its gain on the noisy scenes Ramen ($+6.30$) and Waldo\_kitchen ($\mathbf{+23.01}$); Figurines ($+1.11$) and Teatime ($+0.00$) are unaffected.  The per-scene split tracks scene clutter rather than scene size: Waldo\_kitchen contains dozens of visually near-identical fixtures (white cabinets, similarly-shaped jars and utensils) that produce dense top-$k$ similarity clusters where the top-1 wins by a near-tie, so $\Delta_{\min}$ rejects most such fragile picks; Ramen exhibits a milder version of the same effect among food items in one bowl.  Figurines and Teatime, in contrast, contain spatially separated objects with distinctive geometry and colour, producing wide top-1 margins that the filter rarely activates on.

\begin{table}[h]
\centering
\caption{$\Delta_{\min}$ sweep at $\sigma_{\text{abs}}{=}0.25$ on LERF-OVS at $\lambda{=}0.7$ (mIoU, \%).  $\Delta_{\min}{=}0$ is equivalent to disabling the filter pair.  Deployed $\Delta_{\min}{=}0.12$.}
\label{tab:threshold_ablation}
\small
\setlength{\tabcolsep}{6pt}
\begin{tabular}{c ccccc}
\toprule
$\Delta_{\min}$ & Figurines & Ramen & Teatime & Waldo\_kitchen & Mean \\
\midrule
0.00 & 77.60 & 58.38 & 77.65 & 43.75 & 64.34 \\
0.05 & 80.58 & 60.81 & 77.65 & 48.33 & 66.84 \\
0.10 & 78.71 & 64.68 & 77.65 & 57.85 & 69.72 \\
0.12 & 78.71 & 64.68 & 77.65 & 66.76 & 71.95 \\
0.15 & 78.71 & 57.94 & 74.41 & 57.43 & 67.12 \\
0.17 & 75.96 & 55.17 & 68.72 & 56.02 & 63.97 \\
0.20 & 73.13 & 50.21 & 62.40 & 55.94 & 60.42 \\
\bottomrule
\end{tabular}
\end{table}

\paragraph{Decomposition.}
The same thresholds account for the gaps between Table~\ref{tab:lambda_sensitivity} (no thresholds) and the deployed Table~\ref{tab:component_ablation}: $64.34 \to 71.95$ at $\lambda{=}0.7$ recovers row~(d) exactly, and the $-5.73$ gap between the $\lambda{=}1.0$ row ($61.12$) and the deployed CLIP-only row~(c) ($66.85$) reflects the same filtering applied at $\lambda{=}1$.  The threshold values are calibrated once and held fixed across $\lambda$.

\subsection{Top-\texorpdfstring{$M$}{M} Evidence Support}
\label{app:m_ablation}

Table~\ref{tab:m_sensitivity} sweeps $M \in \{1,2,3,4,5\}$ at the CLIP-best configuration (row~(c) of Table~\ref{tab:component_ablation}).  $M{=}1$ collapses Late-Argmax to hard-label rendering and incurs the largest drop; mean mIoU peaks at the deployed default $M{=}3$ ($66.85$), $+5.46$ over $M{=}1$ ($61.39$), and degrades beyond ($M{=}4{:}\,64.07$, $M{=}5{:}\,64.07$) as the truncated support sparsifies retained mass over labels with little genuine evidence.

\begin{table}[h]
\centering
\caption{Top-$M$ sweep on LERF-OVS (mIoU\,/\,mAcc, \%).}
\label{tab:m_sensitivity}
\small
\setlength{\tabcolsep}{4pt}
\begin{tabular}{c cc cc cc cc cc}
\toprule
$M$
& \multicolumn{2}{c}{Figurines}
& \multicolumn{2}{c}{Ramen}
& \multicolumn{2}{c}{Teatime}
& \multicolumn{2}{c}{Waldo\_kitchen}
& \multicolumn{2}{c}{Mean} \\
\cmidrule(lr){2-3} \cmidrule(lr){4-5} \cmidrule(lr){6-7} \cmidrule(lr){8-9} \cmidrule(lr){10-11}
& mIoU & mAcc & mIoU & mAcc & mIoU & mAcc & mIoU & mAcc & mIoU & mAcc \\
\midrule
1 & 65.32 & 85.71 & 55.25 & 71.83 & 74.81 & 86.44 & 50.18 & 68.18 & 61.39 & 78.04 \\
2 & 70.66 & 87.50 & 54.78 & 73.24 & 71.10 & 84.75 & 53.37 & 68.18 & 62.48 & 78.42 \\
3 & 74.81 & 91.07 & 56.15 & 73.24 & 74.70 & 84.75 & 61.72 & 81.82 & 66.85 & 82.72 \\
4 & 74.78 & 91.07 & 51.33 & 60.56 & 75.55 & 84.75 & 54.62 & 72.73 & 64.07 & 77.28 \\
5 & 74.76 & 91.07 & 51.58 & 71.83 & 75.48 & 84.75 & 54.46 & 72.73 & 64.07 & 80.09 \\
\bottomrule
\end{tabular}
\end{table}

\subsection{Dirichlet Prior Strength \texorpdfstring{$\tau$}{tau}}
\label{app:tau_ablation}

Table~\ref{tab:tau_sensitivity} sweeps $\tau \in
\{1,2,5,10,20,50\}$ at the CLIP-best configuration.  Both
extremes degrade: small $\tau$ over-tempers the injection ($\tau{=}1$ drops mean mIoU to $60.41$), while large $\tau$ weakens the per-frame bound and approaches additive accumulation ($\tau{=}20$: $61.15$).  The deployed default $\tau{=}5$ achieves the best mean mIoU $66.85$, with the sharpest gain on the adversarial scenes Ramen ($+10.74$ over $\tau{=}2$) and Waldo\_kitchen ($+11.62$ over $\tau{=}2$), matching the qualitative analysis of \S\ref{sec:eta_ablation}.

\begin{table}[h]
\centering
\caption{$\tau$ sweep on LERF-OVS (mIoU\,/\,mAcc, \%).}
\label{tab:tau_sensitivity}
\small
\setlength{\tabcolsep}{4pt}
\begin{tabular}{c cc cc cc cc cc}
\toprule
$\tau$
& \multicolumn{2}{c}{Figurines}
& \multicolumn{2}{c}{Ramen}
& \multicolumn{2}{c}{Teatime}
& \multicolumn{2}{c}{Waldo\_kitchen}
& \multicolumn{2}{c}{Mean} \\
\cmidrule(lr){2-3} \cmidrule(lr){4-5} \cmidrule(lr){6-7} \cmidrule(lr){8-9} \cmidrule(lr){10-11}
& mIoU & mAcc & mIoU & mAcc & mIoU & mAcc & mIoU & mAcc & mIoU & mAcc \\
\midrule
1  & 73.59 & 92.86 & 38.89 & 53.52 & 74.96 & 84.75 & 54.20 & 72.73 & 60.41 & 75.96 \\
2  & 74.81 & 91.07 & 45.41 & 61.97 & 74.97 & 84.75 & 50.10 & 68.18 & 61.32 & 76.49 \\
5  & 74.81 & 91.07 & 56.15 & 73.24 & 74.70 & 84.75 & 61.72 & 81.82 & 66.85 & 82.72 \\
10 & 74.80 & 91.07 & 51.60 & 66.20 & 74.55 & 84.75 & 53.47 & 72.73 & 63.61 & 78.69 \\
20 & 71.48 & 87.50 & 52.03 & 70.42 & 71.97 & 84.75 & 49.13 & 68.18 & 61.15 & 77.71 \\
50 & 74.70 & 91.07 & 49.95 & 66.20 & 76.56 & 86.44 & 56.20 & 77.27 & 64.35 & 80.25 \\
\bottomrule
\end{tabular}
\end{table}

\subsection{Stability Filter: Per-scene Proposal Counts}
\label{app:sf_details}

Table~\ref{tab:sf_counts} isolates the Stability Filter (SF) effect on LERF-OVS at the \emph{deployed} CLIP-best configuration (DP on, LLM off; row (c) of Table~\ref{tab:component_ablation} with the filter toggled). The filter discards $92$--$97\%$ of unstable raw instance proposals per scene, converting noisy SAM mask fragmentation into a compact, high-reuse set of stable instances.

\begin{table}[h]
\centering
\caption{Stability Filter counts on LERF-OVS. \emph{raw} = total instance proposals discovered after mapping; \emph{pass} = instances meeting the strict criterion $M_i>50 \land m_i<30$; \emph{filt} = instances zeroed out.}
\label{tab:sf_counts}
\small
\setlength{\tabcolsep}{4pt}
\begin{tabular}{lrrr|rrr}
\toprule
Scene & raw & pass & filt
      & mIoU off & mIoU on & $\Delta$ \\
\midrule
Figurines      &  511 & 31 & 443 & 74.00 & 74.81 & $+0.81$ \\
Ramen          &  255 & 20 & 161 & 44.21 & 56.15 & $+11.95$ \\
Teatime        &  511 & 15 & 461 & 58.20 & 74.70 & $+16.50$ \\
Waldo\_kitchen & 1023 & 67 & 709 & 54.90 & 61.72 & $+6.82$ \\
\midrule
Mean & 575 & 33 & 444 & 57.83 & 66.85 & $+9.02$ \\
\bottomrule
\end{tabular}
\end{table}

\subsection{Tempered Dirichlet: Per-class \texorpdfstring{$\Delta$}{Delta}IoU}
\label{app:eta_per_class_details}

Table~\ref{tab:eta_per_class} lists per-class $\Delta$IoU (tempered $\eta$ $-$ naive $\eta$) on the two adversarial scenes; listed classes are those with $|\Delta| \geq 10$. Dramatic gains concentrate on queries that require cross-view identity stabilisation (\emph{wavy noodles}, \emph{red cup}, \emph{kamaboko}), while classes with already-distinct appearance sit near ceiling under both configurations and show negligible change.

\begin{table}[h]
\centering
\caption{Per-class $\Delta$IoU (tempered $\eta$ $-$ naive $\eta$) on the two adversarial scenes.}
\label{tab:eta_per_class}
\small
\setlength{\tabcolsep}{6pt}
\begin{tabular}{lccl}
\toprule
Class & Naive $\eta$ & Tempered $\eta$ & $\Delta$IoU \\
\midrule
\multicolumn{4}{l}{\emph{Ramen}} \\
\midrule
wavy noodles  &  0.00 & 85.62 & $+85.62$ \\
kamaboko      & 35.27 & 82.30 & $+47.02$ \\
bowl          & 38.27 & 56.22 & $+17.95$ \\
napkin        & 30.59 & 41.21 & $+10.62$ \\
corn          & 39.95 & 26.45 & $-13.49$ \\
\midrule
\multicolumn{4}{l}{\emph{Waldo\_kitchen}} \\
\midrule
red cup       &  0.00 & 85.05 & $+85.05$ \\
yellow desk   & 51.54 & 87.90 & $+36.37$ \\
spoon         &  0.00 & 28.46 & $+28.46$ \\
plate         & 26.01 & 48.26 & $+22.25$ \\
ottolenghi    & 78.21 & 96.88 & $+18.67$ \\
ketchup       & 15.81 & 27.11 & $+11.30$ \\
cabinet       & 97.41 & 81.31 & $-16.10$ \\
\bottomrule
\end{tabular}
\end{table}

\section{Qualitative Results and Failure Cases}
\label{app:qualitative_extended}

\subsection{3D-OVS: Multi-query Coverage across Five Scenes}
\label{app:3dovs_qualitative}

Figure~\ref{fig:3dovs_qualitative} visualises DIGS's predictions on all five 3D-OVS scenes under the detection-bridging query mode (\S\ref{sec:det_bridge}). Each frame carries multiple free-form prompts (e.g.\ \emph{Lawn}: bottle, ball, headphones, apron) and the predicted masks align tightly with GT boundaries even on thin or textured objects, consistent with the $96.45$ mean mIoU in Table~\ref{tab:3dovs_full}; no per-scene gradient training is performed.

\begin{figure}[h]
  \centering
  \includegraphics[width=\linewidth]{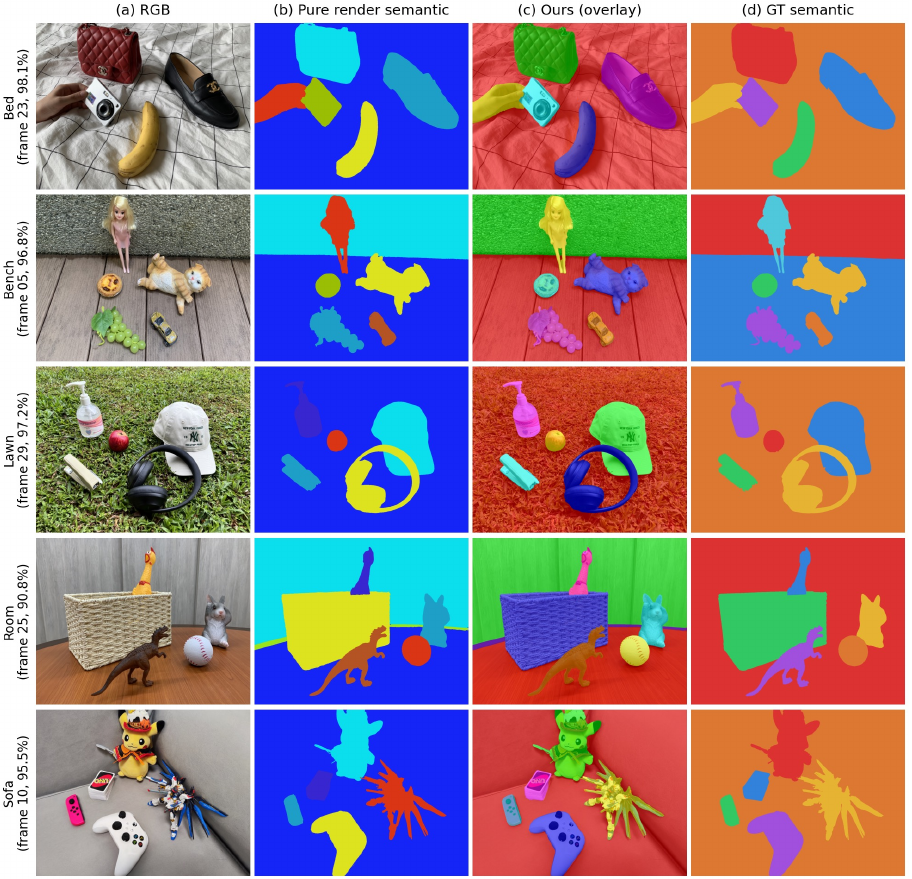}
  \caption{3D-OVS qualitative results across all five scenes.}
  \label{fig:3dovs_qualitative}
\end{figure}

\subsection{Tempered Dirichlet \texorpdfstring{$\eta$}{eta}: Qualitative Comparison}
\label{app:eta_qualitative}

Figure~\ref{fig:eta_qualitative} visualises one representative prompt per scene across all four LERF-OVS scenes, comparing rendered Late-Argmax label maps under naive ($\eta{=}1$) and tempered $\eta_i^{(t)}$ (Eq.~\eqref{eq:dirichlet_u}).  Tempered $\eta$ produces visibly cleaner instance footprints in every panel, and the resulting spatial-retrieval mask aligns markedly more tightly with the ground truth.

\begin{figure}[h]
\centering
\includegraphics[width=\linewidth]{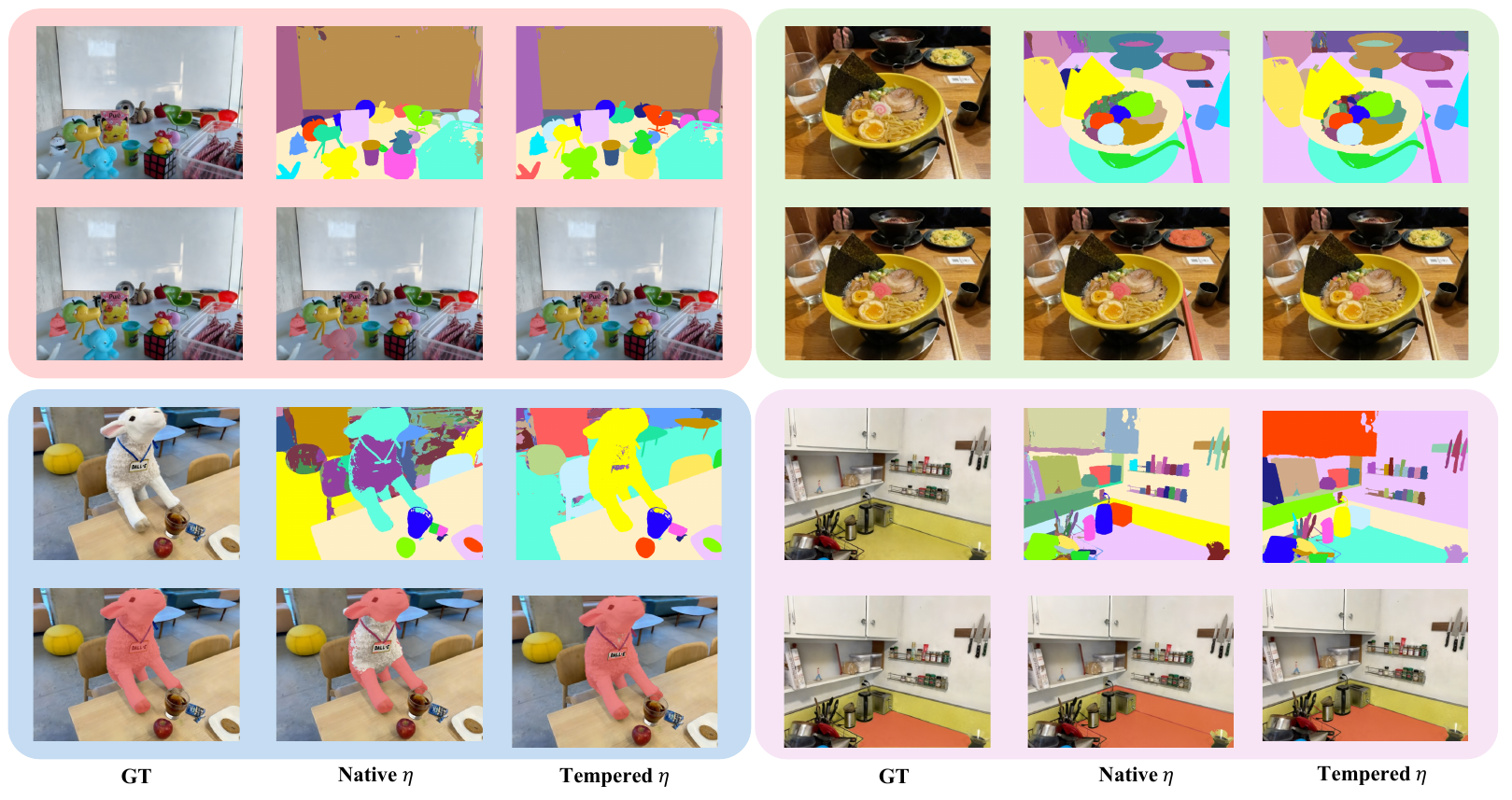}
\caption{Naive $\eta$ versus tempered $\eta_i^{(t)}$ across all four LERF-OVS scenes.  One representative prompt per scene.  Tempered $\eta$ produces visibly cleaner instance footprints in every panel.}
\label{fig:eta_qualitative}
\end{figure}

\subsection{Failure Cases}
\label{app:failure_cases}

Despite leading mean mIoU on all three benchmarks, DIGS exhibits two residual failure modes that trace to upstream rendering and segmentation limits; representative examples are in Figures~\ref{fig:failure_f1} and~\ref{fig:failure_f2}.

\paragraph{F1. Transparency couples voting weight to background.}
Our per-Gaussian voting weight $W_i^{(t)}$ is the alpha-compositing weight from rendering, so transparent and translucent objects (glass / plastic surfaces) propagate non-trivial weight through the foreground surface to background Gaussians at the same pixel.  The closed-loop association then assigns the foreground SAM mask partly to background Gaussians, ``smearing'' the instance footprint past the transparent surface and inflating its 3D extent.  This is visible on the \emph{transparent box} in Figurines and the \emph{transparent cup} in Ramen.

\paragraph{F2. SAM-granularity caps part-level prompts.}
SAM's automatic-mask generator commits to a single granularity per scene; when a prompt names a sub-region of a larger SAM mask, our reconstructed semantic map has no slot for the part and retrieval returns the whole parent instance.  Affected queries on LERF-OVS include \emph{waldo} (a printed character on the cabinet), \emph{pirate hat} (part of a figurine), and \emph{hooves} together with \emph{dall-e brand} (a body region and a printed logo on the same figurine); in each case the relevant SAM mask covers the whole parent, capping the achievable IoU on the part-level query.  This reflects the SAM-granularity ceiling discussed in \S\ref{sec:conclusion} (\emph{Limitations and outlook}).

\begin{figure}[h]
\centering
\includegraphics[width=\linewidth]{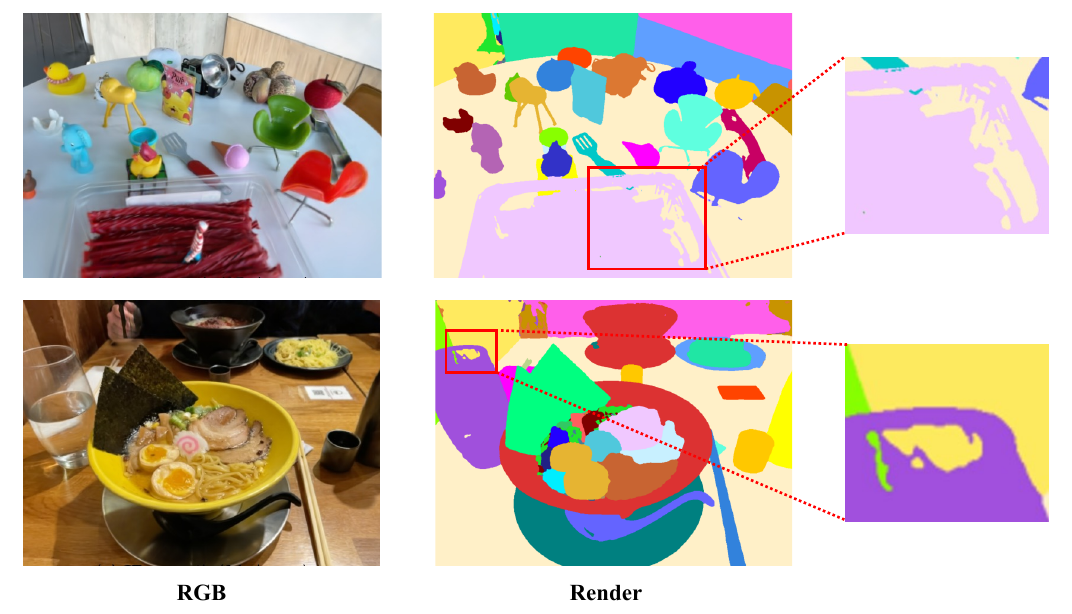}
\caption{F1: Transparency couples voting weight to background---\emph{transparent box} (Figurines) and \emph{transparent cup} (Ramen).}
\label{fig:failure_f1}
\end{figure}

\begin{figure}[h]
\centering
\includegraphics[width=\linewidth]{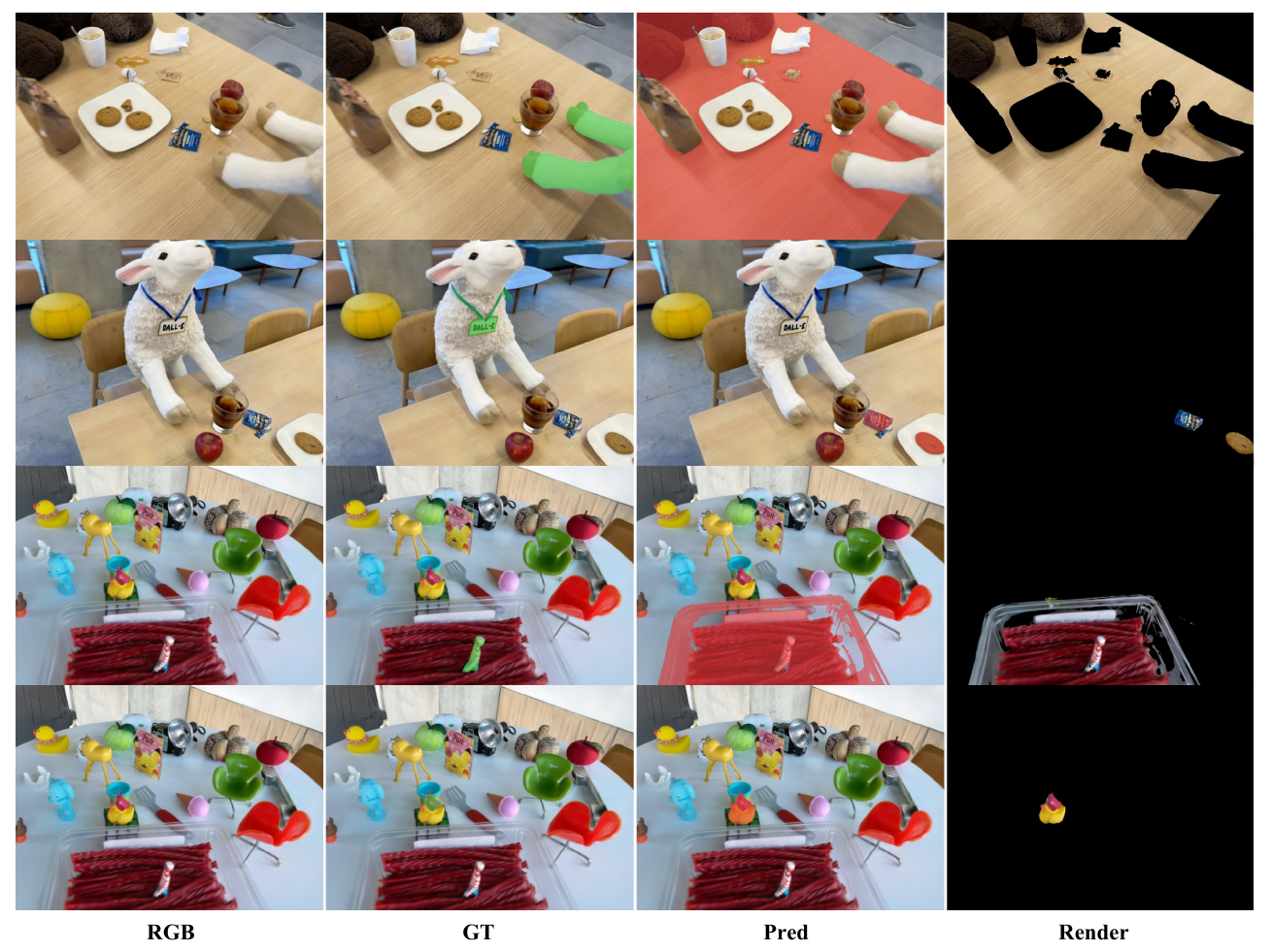}
\caption{F2: SAM-granularity caps part-level prompts---\emph{waldo}, \emph{pirate hat}, and \emph{hooves}\,/\,\emph{dall-e brand} (paired on the same figurine) all retrieve the whole parent instance.}
\label{fig:failure_f2}
\end{figure}

\FloatBarrier
\clearpage
\section*{NeurIPS Paper Checklist}

\begin{enumerate}

\item {\bf Claims}
    \item[] Question: Do the main claims made in the abstract and introduction accurately reflect the paper's contributions and scope?
    \item[] Answer: \answerYes{}
    \item[] Justification: The main claims made in the abstract and Section~\ref{sec:intro} (Introduction) accurately reflect the paper's contributions and scope.
    \item[] Guidelines:
    \begin{itemize}
        \item The answer \answerNA{} means that the abstract and introduction do not include the claims made in the paper.
        \item The abstract and/or introduction should clearly state the claims made, including the contributions made in the paper and important assumptions and limitations. A \answerNo{} or \answerNA{} answer to this question will not be perceived well by the reviewers. 
        \item The claims made should match theoretical and experimental results, and reflect how much the results can be expected to generalize to other settings. 
        \item It is fine to include aspirational goals as motivation as long as it is clear that these goals are not attained by the paper. 
    \end{itemize}

\item {\bf Limitations}
    \item[] Question: Does the paper discuss the limitations of the work performed by the authors?
    \item[] Answer: \answerYes{}
    \item[] Justification: We discussed the limitations in the Conclusion and the Appendix.
    \item[] Guidelines:
    \begin{itemize}
        \item The answer \answerNA{} means that the paper has no limitation while the answer \answerNo{} means that the paper has limitations, but those are not discussed in the paper. 
        \item The authors are encouraged to create a separate ``Limitations'' section in their paper.
        \item The paper should point out any strong assumptions and how robust the results are to violations of these assumptions (e.g., independence assumptions, noiseless settings, model well-specification, asymptotic approximations only holding locally). The authors should reflect on how these assumptions might be violated in practice and what the implications would be.
        \item The authors should reflect on the scope of the claims made, e.g., if the approach was only tested on a few datasets or with a few runs. In general, empirical results often depend on implicit assumptions, which should be articulated.
        \item The authors should reflect on the factors that influence the performance of the approach. For example, a facial recognition algorithm may perform poorly when image resolution is low or images are taken in low lighting. Or a speech-to-text system might not be used reliably to provide closed captions for online lectures because it fails to handle technical jargon.
        \item The authors should discuss the computational efficiency of the proposed algorithms and how they scale with dataset size.
        \item If applicable, the authors should discuss possible limitations of their approach to address problems of privacy and fairness.
        \item While the authors might fear that complete honesty about limitations might be used by reviewers as grounds for rejection, a worse outcome might be that reviewers discover limitations that aren't acknowledged in the paper. The authors should use their best judgment and recognize that individual actions in favor of transparency play an important role in developing norms that preserve the integrity of the community. Reviewers will be specifically instructed to not penalize honesty concerning limitations.
    \end{itemize}

\item {\bf Theory assumptions and proofs}
    \item[] Question: For each theoretical result, does the paper provide the full set of assumptions and a complete (and correct) proof?
    \item[] Answer: \answerYes{}
    \item[] Justification: The theoretical results are stated in Section~\ref{sec:tempered_dirichlet} (Method) with full assumptions and proofs in the Appendix.
    \item[] Guidelines:
    \begin{itemize}
        \item The answer \answerNA{} means that the paper does not include theoretical results. 
        \item All the theorems, formulas, and proofs in the paper should be numbered and cross-referenced.
        \item All assumptions should be clearly stated or referenced in the statement of any theorems.
        \item The proofs can either appear in the main paper or the supplemental material, but if they appear in the supplemental material, the authors are encouraged to provide a short proof sketch to provide intuition. 
        \item Inversely, any informal proof provided in the core of the paper should be complemented by formal proofs provided in appendix or supplemental material.
        \item Theorems and Lemmas that the proof relies upon should be properly referenced. 
    \end{itemize}

    \item {\bf Experimental result reproducibility}
    \item[] Question: Does the paper fully disclose all the information needed to reproduce the main experimental results of the paper to the extent that it affects the main claims and/or conclusions of the paper (regardless of whether the code and data are provided or not)?
    \item[] Answer: \answerYes{}
    \item[] Justification: We have included the pipeline algorithm in the Appendix and released the code.
    \item[] Guidelines:
    \begin{itemize}
        \item The answer \answerNA{} means that the paper does not include experiments.
        \item If the paper includes experiments, a \answerNo{} answer to this question will not be perceived well by the reviewers: Making the paper reproducible is important, regardless of whether the code and data are provided or not.
        \item If the contribution is a dataset and\slash or model, the authors should describe the steps taken to make their results reproducible or verifiable. 
        \item Depending on the contribution, reproducibility can be accomplished in various ways. For example, if the contribution is a novel architecture, describing the architecture fully might suffice, or if the contribution is a specific model and empirical evaluation, it may be necessary to either make it possible for others to replicate the model with the same dataset, or provide access to the model. In general. releasing code and data is often one good way to accomplish this, but reproducibility can also be provided via detailed instructions for how to replicate the results, access to a hosted model (e.g., in the case of a large language model), releasing of a model checkpoint, or other means that are appropriate to the research performed.
        \item While NeurIPS does not require releasing code, the conference does require all submissions to provide some reasonable avenue for reproducibility, which may depend on the nature of the contribution. For example
        \begin{enumerate}
            \item If the contribution is primarily a new algorithm, the paper should make it clear how to reproduce that algorithm.
            \item If the contribution is primarily a new model architecture, the paper should describe the architecture clearly and fully.
            \item If the contribution is a new model (e.g., a large language model), then there should either be a way to access this model for reproducing the results or a way to reproduce the model (e.g., with an open-source dataset or instructions for how to construct the dataset).
            \item We recognize that reproducibility may be tricky in some cases, in which case authors are welcome to describe the particular way they provide for reproducibility. In the case of closed-source models, it may be that access to the model is limited in some way (e.g., to registered users), but it should be possible for other researchers to have some path to reproducing or verifying the results.
        \end{enumerate}
    \end{itemize}

\item {\bf Open access to data and code}
    \item[] Question: Does the paper provide open access to the data and code, with sufficient instructions to faithfully reproduce the main experimental results, as described in supplemental material?
    \item[] Answer: \answerYes{}
    \item[] Justification: We have released the source codes.
    \item[] Guidelines:
    \begin{itemize}
        \item The answer \answerNA{} means that paper does not include experiments requiring code.
        \item Please see the NeurIPS code and data submission guidelines (\url{https://neurips.cc/public/guides/CodeSubmissionPolicy}) for more details.
        \item While we encourage the release of code and data, we understand that this might not be possible, so \answerNo{} is an acceptable answer. Papers cannot be rejected simply for not including code, unless this is central to the contribution (e.g., for a new open-source benchmark).
        \item The instructions should contain the exact command and environment needed to run to reproduce the results. See the NeurIPS code and data submission guidelines (\url{https://neurips.cc/public/guides/CodeSubmissionPolicy}) for more details.
        \item The authors should provide instructions on data access and preparation, including how to access the raw data, preprocessed data, intermediate data, and generated data, etc.
        \item The authors should provide scripts to reproduce all experimental results for the new proposed method and baselines. If only a subset of experiments are reproducible, they should state which ones are omitted from the script and why.
        \item At submission time, to preserve anonymity, the authors should release anonymized versions (if applicable).
        \item Providing as much information as possible in supplemental material (appended to the paper) is recommended, but including URLs to data and code is permitted.
    \end{itemize}

\item {\bf Experimental setting/details}
    \item[] Question: Does the paper specify all the training and test details (e.g., data splits, hyperparameters, how they were chosen, type of optimizer) necessary to understand the results?
    \item[] Answer: \answerYes{}
    \item[] Justification: We have included it in the Experimental Setup.
    \item[] Guidelines:
    \begin{itemize}
        \item The answer \answerNA{} means that the paper does not include experiments.
        \item The experimental setting should be presented in the core of the paper to a level of detail that is necessary to appreciate the results and make sense of them.
        \item The full details can be provided either with the code, in appendix, or as supplemental material.
    \end{itemize}

\item {\bf Experiment statistical significance}
    \item[] Question: Does the paper report error bars suitably and correctly defined or other appropriate information about the statistical significance of the experiments?
    \item[] Answer: \answerNo{}
    \item[] Justification: It is not standard in this field.
    \item[] Guidelines:
    \begin{itemize}
        \item The answer \answerNA{} means that the paper does not include experiments.
        \item The authors should answer \answerYes{} if the results are accompanied by error bars, confidence intervals, or statistical significance tests, at least for the experiments that support the main claims of the paper.
        \item The factors of variability that the error bars are capturing should be clearly stated (for example, train/test split, initialization, random drawing of some parameter, or overall run with given experimental conditions).
        \item The method for calculating the error bars should be explained (closed form formula, call to a library function, bootstrap, etc.)
        \item The assumptions made should be given (e.g., Normally distributed errors).
        \item It should be clear whether the error bar is the standard deviation or the standard error of the mean.
        \item It is OK to report 1-sigma error bars, but one should state it. The authors should preferably report a 2-sigma error bar than state that they have a 96\% CI, if the hypothesis of Normality of errors is not verified.
        \item For asymmetric distributions, the authors should be careful not to show in tables or figures symmetric error bars that would yield results that are out of range (e.g., negative error rates).
        \item If error bars are reported in tables or plots, the authors should explain in the text how they were calculated and reference the corresponding figures or tables in the text.
    \end{itemize}

\item {\bf Experiments compute resources}
    \item[] Question: For each experiment, does the paper provide sufficient information on the computer resources (type of compute workers, memory, time of execution) needed to reproduce the experiments?
    \item[] Answer: \answerYes{}
    \item[] Justification: We have included it in the Experimental Setup.
    \item[] Guidelines:
    \begin{itemize}
        \item The answer \answerNA{} means that the paper does not include experiments.
        \item The paper should indicate the type of compute workers CPU or GPU, internal cluster, or cloud provider, including relevant memory and storage.
        \item The paper should provide the amount of compute required for each of the individual experimental runs as well as estimate the total compute. 
        \item The paper should disclose whether the full research project required more compute than the experiments reported in the paper (e.g., preliminary or failed experiments that didn't make it into the paper). 
    \end{itemize}
    
\item {\bf Code of ethics}
    \item[] Question: Does the research conducted in the paper conform, in every respect, with the NeurIPS Code of Ethics \url{https://neurips.cc/public/EthicsGuidelines}?
    \item[] Answer: \answerYes{}
    \item[] Justification: Our research conducted in the paper conforms to the NeurIPS Code of Ethics.
    \item[] Guidelines:
    \begin{itemize}
        \item The answer \answerNA{} means that the authors have not reviewed the NeurIPS Code of Ethics.
        \item If the authors answer \answerNo, they should explain the special circumstances that require a deviation from the Code of Ethics.
        \item The authors should make sure to preserve anonymity (e.g., if there is a special consideration due to laws or regulations in their jurisdiction).
    \end{itemize}

\item {\bf Broader impacts}
    \item[] Question: Does the paper discuss both potential positive societal impacts and negative societal impacts of the work performed?
    \item[] Answer: \answerYes{}
    \item[] Justification: We discussed the broader impacts in the Conclusion.
    \item[] Guidelines:
    \begin{itemize}
        \item The answer \answerNA{} means that there is no societal impact of the work performed.
        \item If the authors answer \answerNA{} or \answerNo, they should explain why their work has no societal impact or why the paper does not address societal impact.
        \item Examples of negative societal impacts include potential malicious or unintended uses (e.g., disinformation, generating fake profiles, surveillance), fairness considerations (e.g., deployment of technologies that could make decisions that unfairly impact specific groups), privacy considerations, and security considerations.
        \item The conference expects that many papers will be foundational research and not tied to particular applications, let alone deployments. However, if there is a direct path to any negative applications, the authors should point it out. For example, it is legitimate to point out that an improvement in the quality of generative models could be used to generate Deepfakes for disinformation. On the other hand, it is not needed to point out that a generic algorithm for optimizing neural networks could enable people to train models that generate Deepfakes faster.
        \item The authors should consider possible harms that could arise when the technology is being used as intended and functioning correctly, harms that could arise when the technology is being used as intended but gives incorrect results, and harms following from (intentional or unintentional) misuse of the technology.
        \item If there are negative societal impacts, the authors could also discuss possible mitigation strategies (e.g., gated release of models, providing defenses in addition to attacks, mechanisms for monitoring misuse, mechanisms to monitor how a system learns from feedback over time, improving the efficiency and accessibility of ML).
    \end{itemize}
    
\item {\bf Safeguards}
    \item[] Question: Does the paper describe safeguards that have been put in place for responsible release of data or models that have a high risk for misuse (e.g., pre-trained language models, image generators, or scraped datasets)?
    \item[] Answer: \answerNA{}
    \item[] Justification: This paper poses no such risks.
    \item[] Guidelines:
    \begin{itemize}
        \item The answer \answerNA{} means that the paper poses no such risks.
        \item Released models that have a high risk for misuse or dual-use should be released with necessary safeguards to allow for controlled use of the model, for example by requiring that users adhere to usage guidelines or restrictions to access the model or implementing safety filters. 
        \item Datasets that have been scraped from the Internet could pose safety risks. The authors should describe how they avoided releasing unsafe images.
        \item We recognize that providing effective safeguards is challenging, and many papers do not require this, but we encourage authors to take this into account and make a best faith effort.
    \end{itemize}

\item {\bf Licenses for existing assets}
    \item[] Question: Are the creators or original owners of assets (e.g., code, data, models), used in the paper, properly credited and are the license and terms of use explicitly mentioned and properly respected?
    \item[] Answer: \answerYes{}
    \item[] Justification: We have properly cited the original papers at first use.
    \item[] Guidelines:
    \begin{itemize}
        \item The answer \answerNA{} means that the paper does not use existing assets.
        \item The authors should cite the original paper that produced the code package or dataset.
        \item The authors should state which version of the asset is used and, if possible, include a URL.
        \item The name of the license (e.g., CC-BY 4.0) should be included for each asset.
        \item For scraped data from a particular source (e.g., website), the copyright and terms of service of that source should be provided.
        \item If assets are released, the license, copyright information, and terms of use in the package should be provided. For popular datasets, \url{https://paperswithcode.com/datasets} has curated licenses for some datasets. Their licensing guide can help determine the license of a dataset.
        \item For existing datasets that are re-packaged, both the original license and the license of the derived asset (if it has changed) should be provided.
        \item If this information is not available online, the authors are encouraged to reach out to the asset's creators.
    \end{itemize}

\item {\bf New assets}
    \item[] Question: Are new assets introduced in the paper well documented and is the documentation provided alongside the assets?
    \item[] Answer: \answerNA{}
    \item[] Justification: This paper does not release new assets.
    \item[] Guidelines:
    \begin{itemize}
        \item The answer \answerNA{} means that the paper does not release new assets.
        \item Researchers should communicate the details of the dataset\slash code\slash model as part of their submissions via structured templates. This includes details about training, license, limitations, etc. 
        \item The paper should discuss whether and how consent was obtained from people whose asset is used.
        \item At submission time, remember to anonymize your assets (if applicable). You can either create an anonymized URL or include an anonymized zip file.
    \end{itemize}

\item {\bf Crowdsourcing and research with human subjects}
    \item[] Question: For crowdsourcing experiments and research with human subjects, does the paper include the full text of instructions given to participants and screenshots, if applicable, as well as details about compensation (if any)?
    \item[] Answer: \answerNA{}
    \item[] Justification: The paper does not involve crowdsourcing or research with human subjects.
    \item[] Guidelines:
    \begin{itemize}
        \item The answer \answerNA{} means that the paper does not involve crowdsourcing nor research with human subjects.
        \item Including this information in the supplemental material is fine, but if the main contribution of the paper involves human subjects, then as much detail as possible should be included in the main paper. 
        \item According to the NeurIPS Code of Ethics, workers involved in data collection, curation, or other labor should be paid at least the minimum wage in the country of the data collector. 
    \end{itemize}

\item {\bf Institutional review board (IRB) approvals or equivalent for research with human subjects}
    \item[] Question: Does the paper describe potential risks incurred by study participants, whether such risks were disclosed to the subjects, and whether Institutional Review Board (IRB) approvals (or an equivalent approval/review based on the requirements of your country or institution) were obtained?
    \item[] Answer: \answerNA{}
    \item[] Justification: The paper does not involve research with human subjects, so no IRB approval is required.
    \item[] Guidelines:
    \begin{itemize}
        \item The answer \answerNA{} means that the paper does not involve crowdsourcing nor research with human subjects.
        \item Depending on the country in which research is conducted, IRB approval (or equivalent) may be required for any human subjects research. If you obtained IRB approval, you should clearly state this in the paper. 
        \item We recognize that the procedures for this may vary significantly between institutions and locations, and we expect authors to adhere to the NeurIPS Code of Ethics and the guidelines for their institution. 
        \item For initial submissions, do not include any information that would break anonymity (if applicable), such as the institution conducting the review.
    \end{itemize}

\item {\bf Declaration of LLM usage}
    \item[] Question: Does the paper describe the usage of LLMs if it is an important, original, or non-standard component of the core methods in this research? Note that if the LLM is used only for writing, editing, or formatting purposes and does \emph{not} impact the core methodology, scientific rigor, or originality of the research, declaration is not required.
    \item[] Answer: \answerYes{}
    \item[] Justification: We used an LLM in our experiments; the Appendix documents the prompt template and the LLM model used, and we release the LLM-generated data.
    \item[] Guidelines:
    \begin{itemize}
        \item The answer \answerNA{} means that the core method development in this research does not involve LLMs as any important, original, or non-standard components.
        \item Please refer to our LLM policy in the NeurIPS handbook for what should or should not be described.
    \end{itemize}

\end{enumerate}

\end{document}